\newcommand{\inR}{\in \mathbb{R}}
\newcommand{\R}{ \mathbb{R}}
\newcommand{\N}{ \mathbb{N}}
\newcommand{\eqdef}{\stackrel{\vartriangle}{=}}
\newcommand{\Lop}{{\rm L}}
\newcommand{\Dop}{{\rm D}}
\newcommand{\dint}{{\rm d}}
\DeclareMathOperator*{\esssup}{ess\,sup}
\DeclareMathOperator*{\essinf}{ess\,inf}
\newcommand{\Top}{\mathsf{T}}
\def\V#1{{\boldsymbol{#1}}}         
\def\Spc#1{{\mathcal{#1}}}  
\def\M#1{{\bf{#1}}}  
\def\Op#1{{\mathrm{#1}}}  
\def\ee{\mathrm{e}} 
\def\Indic{\mathbbm{1}}
\def\Identity{\mathrm{Id}} %
\renewcommand{\[}{\begin{equation}}
\renewcommand{\]}[1]{\label{eq:#1}\end{equation}}
\newtheorem{definition}{Definition}
\newtheorem{proposition}{Proposition}
\newtheorem{theorem}{Theorem}
\providecommand{\rev}[1]{\textcolor{black}{#1}}
\title{
Controlled Learning of Pointwise Nonlinearities\\ in Neural-Network-Like Architectures\thanks{The research leading to these results was funded in part by the Swiss National Science Foundation under Grant 200020\_219356 and the European Research Council under Grant ERC-2020-AdG FunLearn-101020573.}
}
\author{
Michael Unser\thanks{Biomedical Imaging Group, \'Ecole polytechnique f\'ed\'erale de Lausanne (EPFL),
Station 17, CH-1015, Lausanne, Switzerland ({\tt michael.unser@epfl.ch}). },
Alexis Goujon, Stanislas Ducotterd
 }
\begin{document}



\maketitle
\begin{abstract}
We present a general variational framework for the training of freeform nonlinearities in layered computational architectures subject to some slope constraints. The regularization that we add to the traditional training loss penalizes the second-order total variation of each trainable activation.
The slope constraints allow us to impose properties such as 1-Lipschitz stability, firm non-expansiveness, and monotonicity/invertibility. These properties are crucial to ensure the proper functioning of certain classes of signal-processing algorithms (e.g., plug-and-play schemes, unrolled proximal gradient, invertible flows).  We prove that the global optimum of the stated constrained-optimization problem is achieved with nonlinearities that are adaptive nonuniform linear splines. We then show how to  solve the resulting function-optimization problem numerically by representing the nonlinearities in a suitable (nonuniform) B-spline basis. Finally, we illustrate the use of our framework with the data-driven design of (weakly) convex regularizers for the denoising of images and the resolution of inverse problems.
\end{abstract}


\section{Introduction}
Modern signal/image processing heavily relies on two basic types of computational modules: (i) linear transforms (examples include convolutions, filterbanks, wavelet transforms, and any linear layer of a neural network); and (ii) pointwise nonlinearities, which are typically shared across signal components.

In traditional signal processing, these modules are fixed and justified by mathematical principles \cite{Mallat2009,Vetterli2014} such as the decoupling of the signal (e.g., Karhunen-Loève transform, independent-component analysis) or its efficient encoding (e.g., DCT or wavelets) with a minimal number of atoms (sparsity) \cite{Mallat2009,Bruckstein2009,Elad2010b,Baraniuk2010}. The encoding usually involves some form of thresholding \cite{Donoho:1995,Moulin1999,Chang2000,Kalifa2003,Cosentino2020}, which accounts for the nonlinear part of the processing. 
The building blocks of iterative reconstruction algorithms such as ISTA \cite{Figueiredo2003}, FISTA \cite{Beck2009b}, and ADMM \cite{Ramani2011} for the recovery of signals under sparsity constraints---as in the context of compressed sensing \cite{Donoho2006, Candes2007}---also align with these categories. These algorithms repeatedly alternate between linear steps (e.g., backprojection followed by signal expansion) and a pointwise nonlinearity (e.g., soft-thresholding) until convergence \cite{Figueiredo2007}.

With the rise of machine learning, neural networks are being increasingly integrated into signal-processing algorithms, often with substantial performance benefits \cite{Gregor2010,Chen2014,
Hammernik2018,AgHeMaJaco2019,Effland2020,Monga2021}. This is facilitated by the fact that neural networks employ the same fundamental operations as classic signal processing. \rev{One builds these networks} by stacking linear modules (such as the convolutional layers of the network) and (pointwise) nonlinearities known as activations. Their specificity lies in the tunability of the linear components, a.k.a.\ the weights of the neural network, which are optimized numerically for best performance. This optimization is achieved through a training phase that necessitates access to a large set of representative data \cite{LeCun2015}. 

While researchers have invested a considerable effort in the fine-tuning of the linear components of neural networks, they have devoted much less attention to the exploration of neuronal activation functions. In fact, those are typically kept fixed, in the form of standardized functions such as the rectified linear unit (ReLU) or various flavors of sigmoids \cite{Dubey2022}. Although some authors have strived to adjust parametric nonlinearities \cite{Agostinelli2015,He2015,Chen2017,Apicella2021}, we contend that there remains untapped potential in the training of freeform activations, which presents both conceptual and computational challenges.
%
%
%
%
%

As argued in Section \ref{Sec:DeepExtension}, the learning of a pointwise nonlinearity in any given layered computational architecture can be formally reduced to 
the determination of a continuous function $f: \R \to \R$ such $f(x_m)= z_m$ 
for an appropriate set of points $(x_m,z_m)\in \R^2, m=1,\dots,M$. Without additional assumptions, this problem is ill-posed because the data are finite while a function has an infinite number of degrees of freedom. The common approach is to favor ``regular" functions by the introduction of a roughness penalty (e.g., the energy of some derivative of $f$)  and to seek the solution that minimizes this penalty. For instance, it is well-known that the best data fit that minimizes $\|f'\|^2_{L_2}=\int_\R |f'(x)|^2 \dint x$ (resp., $\|f''\|^2_{L_2}$)
is a nonuniform linear spline (resp. a cubic spline) with knots at the data locations $x_m$ \cite{deBoor1966, Prenter:1975}. While this result is mathematically elegant, it is not very practical because the resulting $f$ has as many knots/parameters as there are data points to be fitted. An attractive alternative is to replace the traditional Hilbertian penalty with ${\rm TV}^{(2)}(f)$ (the second-order total variation of $f$), which has the remarkable property of also yielding linear spline solutions, albeit with a much smaller number of adaptive knots \cite{Mammen1997, Unser2017, Debarre2020}. 
Below, we highlight the distinctive features of ${\rm TV}^{(2)}(f)$ which, in our view \cite{Unser2019c}, make it  the ideal regularizer for our purpose.
\begin{enumerate}
\item It does not penalize linear/affine solutions since these are in the null space of the underlying regularization operator 
(second-order derivative).
\item The condition ${\rm TV}^{(2)}(f)<\infty$ implies that $f$ is differentiable almost everywhere, which is a prerequisite of the celebrated backpropagation algorithm.
\item It privileges simple piecewise-linear solutions with a minimal number of knots (breakpoints) \cite{Debarre2020}. In that respect, we note that the two most popular nonlinearities used in applications---namely, the ReLU activation and the soft-threshold---are linear splines with as few as one and two knots, respectively.  
\item Despite the fact that the problem of fitting a nonuniform parametric linear spline to data is non-convex (because the positions of the knots must also be optimized), the scheme admits a very efficient gridded implementation with the help of uniform B-splines \cite{Bohra2020b}.
\end{enumerate}

Our present contribution---the ``controlled" part of the story---is to refine the framework in order to handle additional inequality constraints on the derivative of $f$ (see Theorem \ref{Theo:SplineFitConstrained}). This extension 
is significant as it enables the optimal design of ``stable'' nonlinearities with a Lipschitz constant of $1$ (such as ReLU), increasing maps, as well as firmly non-expansive nonlinearities that qualify as proximal operators of a convex potential \cite{Nguyen2018,Gribonval2020}. These conditions turn out to be crucial for the robustness and convergence of iterative algorithms, either of the proximal gradient type (ISTA, FISTA) \cite{Beck2009b, Combettes2011,Parikh_proximal_2014,Mardani2018}, or of the plug-and-play type (which requires the non-expansiveness of the denoising step) \cite{Venkatakrishnan2013plug,Chan2016plug,Ryu2019plug,Kamilov2023plug}.

The paper is organized as follows. We establish the notation in Section \ref{Sec:Prelim} and recall some basic results on the continuity and differentiability of functions. 
Section \ref{Sec:TV2Theo} contains the proof of our key result (Theorem \ref{Theo:SplineFitConstrained}), which establishes the optimality of adaptive linear splines for the fitting of data subject to slope constraints.
In Section \ref{Sec:Potentals}, we relate our optimality result to variational signal processing by identifying the conditions under which a learned spline nonlinearity is either the derivative or the proximal operator of a (weakly) convex potential. We also describe a regularization mechanism (Proposition \ref{Prop:reweightedProx}) to adjust a learned proximal map to changes in noise levels. In Section \ref{Sec:Framework}, we apply our theoretical results to the training of freeform activations in deep neural networks and/or in unrolled architectures. We then present a discretization mechanism that extends our prior deep-spline framework \cite{Bohra2020b} in two respects: (i) the use of a more general parameterization of the nonlinearities involving nonuniform B-splines; and (ii) the ability to directly control their slope excursion.
Finally, in Section \ref{Sec:Denoising}, we demonstrate the use of our framework to learn interpretable (weakly) convex potentials via a basic image-denoising task.

\section{Mathematical Preliminaries}
\label{Sec:Prelim}
Let $f: \R \to \R$ be a function that satisfies 
\begin{align}
\label{Eq:LipInequal}
\left|f(y)-f(x)\right| \le L_0 |y-x|
\end{align}
for all $x,y \in \R$ and some constant $L_0$. Such a function is said to be {\em Lipschitz-continuous}. The smallest
constant $L_0$ such that \eqref{Eq:LipInequal} holds is the {\em Lipschitz constant} of $f$, which is denoted by $\|f\|_{\rm Lip}$.
The collection of all functions with a finite Lipschitz constant is denoted by ${\rm Lip}(\R)$. 

Lipschitz continuity is a strong form of (uniform) continuity. In fact, all the members of ${\rm Lip}(\R)$ are absolutely continuous and, therefore, 
differentiable almost everywhere with a measurable and essentially bounded derivative (Rade\-{macher}'s theorem). The Lipschitz constant of the function then corresponds to the essential supremum of its derivative, so that
\begin{align}
\|f\|_{\rm Lip}=\|f'\|_{L_\infty}\eqdef\esssup_{x \inR}  |f'(x)|
\end{align}
where $f'$ is the derivative of $f$.
Conversely, if $f: \R \to \R$ is absolutely continuous with $|f'(x)|<K\  \mathrm{a.e.}$, then $f \in {\rm Lip}(\R)$. Along the same lines, we have that
$\int_a^b f'(x) \dint x= f(b)-f(a)$ for all $f\in { \rm Lip}(\R)$. Finally, we can equip ${ \rm Lip}(\R)$ with the norm
$\|f\|=\|f'\|_{L_\infty}+|\langle \phi,f \rangle|$ and $\phi(x)=(2 \pi)^{-1/2} \ee^{-|x|^2/2}$ (the relevant property here is
$\int_\R \phi(x)\dint x=\langle \phi, 1\rangle=1$
), which then turns it into a Sobolev-type Banach space.

Another useful seminorm is the second-order total variation of $f$ defined as
\begin{align}
{\rm TV}^{(2)}( f )=\|f''\|_{\Spc M} \eqdef \sup_{\varphi \in \Spc S(\R): \|\varphi\|_{L_\infty}\le 1}  \langle f'', \varphi \rangle,
\end{align}
where $f'' \in \Spc S'(\R)$ is the second derivative of $f$ in the sense of distributions and $\Spc S(\R)$ is Schwartz' space of smooth and rapidly decreasing test functions. The space of functions
with bounded second-order variation is denoted by ${\rm BV}^{(2)}(\R)$. Similarly to ${\rm Lip}(\R)$, we can equip
${\rm BV}^{(2)}(\R)$ with the norm $\|f\|_{{\rm BV}^{(2)}}={\rm TV}^{(2)}( f )+\sqrt{|\langle \phi,f \rangle|^2+|\langle \phi',f \rangle|^2}$, where the role of the second term\footnote{The guiding principle in the selection of the linear functionals $\phi$ and $(-\phi')$ is their biorthogonality
with a basis of the null space of $\frac{\dint^2}{\dint x^2}$; more precisely, the conditions $\langle \phi,x\rangle=0$ (from the symmetry of $\phi$), $\langle -\phi',1\rangle=\langle \phi,0\rangle=0$, and $\langle -\phi',x\rangle=\langle \phi,1\rangle=1$ (integration by part), which leaves us a wide range of possibilities.} is to remove the ambiguity for the affine components $x \mapsto b_0 + b_1 x,\, (b_0,b_1) \in \R^2$ that span the null space of the second-derivative operator  \cite[Appendix B]{Unser2019c}. 

\subsection{Continuity Bounds}
It turns out that the ${\rm TV}^{(2)}$-seminorm is stricter than the Lipschitz one, which implies that ${\rm BV}^{(2)}(\R)$ is continuously embedded in
${\rm Lip}(\R)$.

\begin{theorem}[\cite{Aziznejad2022}]
Any function with finite second-order total variation is Lipschitz-continuous with its
Lipschitz constant  being bounded by 
\begin{align}
\label{Eq:LipInequal2}
\|f\|_{\rm Lip}\le {\rm TV}^{(2)}( f ) + \ell_{\inf}(f),
\end{align}
where 
\begin{align}
\ell_{\inf}(f)=\inf_{x \ne y}
\frac{\left|f(y)-f(y)\right|}{ |y-x|}=\essinf_{x \inR}  |f'(x)|.
\end{align}
Moreover, \eqref{Eq:LipInequal2} is saturated if and only if $f$ is monotone-convex or monotone-concave.
\end{theorem}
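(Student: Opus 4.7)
The plan is to reduce the theorem to a standard property of scalar BV functions applied to the derivative $g=f'$. Since $f\in {\rm BV}^{(2)}(\R)$, the distribution $f''$ is a finite Radon measure on $\R$, so $g=f'$ admits a representative of classical bounded variation with $\mathrm{TV}(g)=\|f''\|_{\Spc M}={\rm TV}^{(2)}(f)$. In particular, $g$ is essentially bounded, and for any two Lebesgue points $x,y$ of $g$ one has $|g(y)-g(x)|\le \mathrm{TV}(g)$.

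From the elementary chain $|g(y)|\le|g(x)|+|g(y)-g(x)|\le|g(x)|+\mathrm{TV}(g)$, I would successively take the $\essinf$ over $x$ and the $\esssup$ over $y$ to get
\begin{align*}
\|f'\|_{L_\infty}\ \le\ \essinf_{x\inR}|f'(x)|+\mathrm{TV}(g)\ =\ \ell_{\inf}(f)+{\rm TV}^{(2)}(f).
\end{align*}
Combining this with the preliminary identity $\|f\|_{\rm Lip}=\|f'\|_{L_\infty}$ yields \eqref{Eq:LipInequal2}.

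For the saturation part, the preceding estimate shows that equality forces $\esssup |g|-\essinf|g|=\mathrm{TV}(g)$. I would split into two cases according to the essential sign of $g$. If $g$ has constant sign, say $g\ge 0$ a.e., the equality reduces to $\mathrm{TV}(g)=\esssup g-\essinf g$, which in turn forces $g$ to be essentially monotone: otherwise there would exist Lebesgue points $x_1<x_2<x_3$ with $g(x_1)<g(x_2)>g(x_3)$ or a reversed ``U''-configuration, producing a partition sum $|g(x_2)-g(x_1)|+|g(x_3)-g(x_2)|$ strictly greater than $\esssup g-\essinf g$ and contradicting the assumed equality. Monotonicity of $g=f'$ means that $f$ is convex or concave, while the sign condition tells us that $f$ is increasing; the case $g\le 0$ is symmetric and delivers a decreasing convex or concave $f$.

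In the complementary case where $g$ changes sign essentially, setting $a=\esssup g>0$ and $b=-\essinf g>0$, a two-point partition with values approximating $a$ and $-b$ gives $\mathrm{TV}(g)\ge a+b$; together with $\ell_{\inf}(f)\ge 0$ this yields $\ell_{\inf}(f)+{\rm TV}^{(2)}(f)\ge a+b>\max(a,b)=\|f'\|_{L_\infty}$, which rules out saturation. The delicate point is the very first step, namely the clean identification of $f'$ as a classical BV function with $\mathrm{TV}(f')={\rm TV}^{(2)}(f)$, since it requires reconciling the distributional definition of $f''$ with the partition-based classical variation of $f'$; once that bridge is built, the remaining arguments are elementary one-dimensional BV estimates.
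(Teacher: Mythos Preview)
The paper does not prove this theorem; it is quoted from \cite{Aziznejad2022} without argument, so there is nothing in the present paper to compare your approach against.

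On its own merits, your proof is correct. The reduction to classical BV theory for $g=f'$ is the natural move: once $f''\in\Spc M(\R)$, the good representative of $f'$ has pointwise variation equal to $\|f''\|_{\Spc M}$, and the chain $|g(y)|\le|g(x)|+\mathrm{TV}(g)$ followed by $\essinf_x$ then $\esssup_y$ delivers the bound. Your saturation analysis is also sound: the sign-changing case is excluded by $\mathrm{TV}(g)\ge a+b>\max(a,b)$, and in the constant-sign case the identity $\mathrm{TV}(g)=\esssup g-\essinf g$ forces $g$ to be monotone via the three-point ``U''-configuration argument you sketch.

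One omission: you argue only the ``only if'' direction of the saturation claim. The ``if'' direction---that every monotone-convex or monotone-concave $f$ saturates the bound---is a one-line check (for such $f$, $g=f'$ is monotone with constant sign, so $\mathrm{TV}(g)=\esssup g-\essinf g$ and $\ell_{\inf}(f)=\essinf|g|$, whence $\ell_{\inf}(f)+{\rm TV}^{(2)}(f)=\esssup|g|=\|f\|_{\rm Lip}$), but it should be stated for completeness.
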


The range of the derivative of $f\in { \rm Lip}(\R)$ is characterized  by the two constants \begin{align}
s_{\min}(f) &= \essinf_{x \inR}  f'(x)\\
s_{\max}(f) &= \esssup_{x \inR}  f'(x).
\end{align}
The Lipschitz continuity of $f$ allows us to state the general slope inequality
\begin{align}
\label{Eq:Slopeinequal}
s_{\min}(f) 
\le \frac{f(y)-f(x)}{y-x} \le s_{\max}(f)
\end{align}
for any $x,y \in \R$ with $x< y$. In fact, since $\frac{f(y)-f(x)}{y-x}=\frac{f(x)-f(y)}{x-y}$,
\eqref{Eq:Slopeinequal} remains valid for any $x,y \in \R$ with $x\ne y$. We note that the lower and upper  bounds in \eqref{Eq:Slopeinequal} are tight and that $\|f\|_{\mathrm{Lip}}=\max(|s_{\max}(f)|,|s_{\min}(f)|)$ where $s_{\max}(f)$
and $s_{\min}(f)$ can be interpreted as the maximal and minimal slope of $f$, respectively.

\subsection{Canonical Interpolation of an Ordered Set of Points}
\label{Sec:Canonical}
In the sequel, we shall use the symbol $\mathbb{P}=\big\{(x_n,f_n)\big\}_{n=
1}^N$ to denote a generic set of data points on the real line
with $-\infty<x_1< x_2< \dots< x_N<+\infty$ and $f_1, \dots,f_N \inR$.
It is also convenient to identify the geometric slopes of  $\mathbb{P}$ as
\begin{align}
\label{Eq:slopes}
s_n=s_n(\mathbb{P})= \frac{ f_n-f_{n-1}}{x_n-x_{n-1}}, \quad n=2, \dots, N,
\end{align}
and the corresponding bounding constants
\begin{align}
s_{\min}(\mathbb{P})&=\min \left(s_n\right)_{n=2}^N\\
s_{\max}(\mathbb{P})&= \max \left(s_n \right)_{n=2}^N.
\end{align}
As preliminary step, we consider the interval $\Omega_n = [x_{n-1},x_n]$ and investigate the search for
a continuous function that optimally interpolates the boundary points in the sense that its slope has the tightest range. The
optimization is performed over the set of admissible interpolators
\begin{align}
\label{Eq:2pointInt}
\mathbb{I}_{n-1,n}=\{f \in  {\rm Lip}(\R): f(x_{n-1})=f_{n-1} \mbox{ and } f(x_{n})=f_n\}.
\end{align}
By setting $s_{\min}(f)=s_{\max}(f)=C$ in \eqref{Eq:Slopeinequal}, we find that the optimal solution is such that
$f'(x)=C\  \mathrm{a.e.}$ in $\Omega_n$ which, when combined with the two interpolation constraints, yields the solution
$f^\ast(x)=f_{n-1} + C(x-x_{n-1})$ with $C=s_n$. 
As for any other $f \in \mathbb{I}_{n-1,n}$, we always have that $s_{\min}(f)\le s_n \le s_{\max}(f)$. Morever, when $f$ is differentiable over $\Omega_n$, 
there necessarily exists a point $c \in \Omega_n$ 
such that $f'(c)=s_n$ (by the mean value theorem). 
This shows that the linear interpolator has the {\em tightest slope excursion} as well as the smallest Lipschitz constant (min-Lip problem) {\em among all admissible interpolators}.  

The argument readily extends to the complete set $\mathbb{P}$ of points.
Indeed, for any interpolator $f_{\rm int} \in {\rm Lip}(\R)$ such that $f_{\rm int}(x_n)=f_n, n=1,\dots,N$, we have that
\begin{align}
s_{\min}(f_{\rm int})=\inf_{x,y \inR:\; x\ne y} \frac{f(y)-f(x)}{y-x} \le s_{\min}(\mathbb{P})\\
s_{\max}(f_{\rm int})=\sup_{x,y \inR:\; x\ne y} \frac{f(y)-f(x)}{y-x} \ge s_{\max}(\mathbb{P})
\end{align}
because $\mathbb{P}$ is a subset of $\R\times \R$ with these two bounds being tight for
the canonical linear-spline interpolator (see Definition \ref{Def:Canonical} below). 
The caveat, however, is that the solution to the tight-slope problem (resp., the min-Lip problem) is no longer unique, unless the points are colinear.

\begin{definition}
[Canonical interpolator]
\label{Def:Canonical}
The canonical interpolator $f_{\mathrm{int}, \mathbb{P}} : \R \to \R$ of $\mathbb{P}$  is the unique continuous piecewise-linear (CPWL) function that interpolates $\mathbb{P}$ and is differentiable over $\R\backslash \{x_2 , \dots , x_{N-1}\}$.
\end{definition}
\begin{figure}
\center
 \includegraphics[width = 10cm]{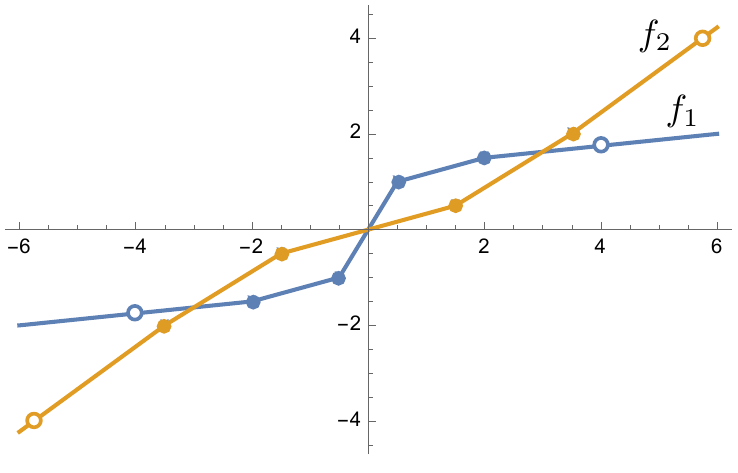}
 \caption{\label{Fig: SplineInterpol} \rev{Canonical spline interpolants for two sets $\mathbb{P}_1$ and $\mathbb{P}_2$ of points  represented as small circles in the plane. 
The filled circles are the spline knots (breakpoints), while the empty ones are the boundary points used for linear extrapolation. The two splines are linked because they are induced by a common (learnable) convex potential $\phi$ with $f_1=\phi'$ and $f_2={\rm prox}_{\phi}$. (See detailed explanations Section 4.)}}
\end{figure}

In other words, $f_{\mathrm{int}, \mathbb{P}}$ is the piecewise-linear spline with knots (a.k.a.\ breakpoints) at $x_2, \dots, x_{N-1}$ that satisfies the interpolation conditions $f_{\mathrm{int}}(x_n)=f_n, n=1,\dots, N$ and that extends linearly
beyond the interval $[x_1,\dots, x_{N}]$ or, equivalently, \rev{fulfills} natural boundary conditions at $x_1$ and $x_{N}$, \rev{as illustrated in Figure \ref{Fig: SplineInterpol}}. In general, $f_{\mathrm{int}, \mathbb{P}}$ is composed of $(N-1)$ linear segments and its derivative is piecewise-constant with
\begin{align}
f'_{\mathrm{int}, \mathbb{P}}(x)=\begin{cases}
s_2,& x < x_1\\
s_n,& x \in [x_{n-1}, x_n), n\in \{2,\dots,N\}\\
s_{N}, & x \ge x_N
\end{cases}
\end{align}
with $s_{\min}(\mathbb{P})\le f'_{\mathrm{int}, \mathbb{P}}(x) \le  s_{\max}(\mathbb{P}).$
Also relevant to our investigation is the observation that the second-order total variation
of the canonical interpolant is
\begin{align}
\label{Eq:TV2spline}
{\rm TV}^{(2)}(f_{\mathrm{int}, \mathbb{P}})={\rm TV}^{(2)}(\mathbb{P})= \sum_{n=3}^N |s_n-s_{n-1}|,
\end{align}
 while its Lipschitz constant is simply 
 \begin{align}
 {\rm Lip}(f_{\mathrm{int}, \mathbb{P}})={\rm Lip}(\mathbb{P})= \max(|s_{\max}(\mathbb{P})|,|s_{\min}(\mathbb{P})|).
 \end{align}
 The conclusion of this section is that there is no interpolator of $\mathbb{P}$ in ${\rm Lip}(\R)$ that achieves
a Lipschitz constant smaller than ${\rm Lip}(\mathbb{P})$ or/and such that the range of its slope is tighter than 
$[s_{\min}(\mathbb{P}), s_{\max}(\mathbb{P})]$. It is also known that the same holds true for the second-order total variation of an interpolator, which cannot be smaller than ${\rm TV}^{(2)}(\mathbb{P})$. 

While we have just seen that the solution that is optimal according to any of the mentioned criteria is achieved by the canonical interpolator in Definition \ref{Def:Canonical}, one should not be fooled by the simplicity of this description.
It turns out that this kind of non-Hilbertian functional-minimization problem admits an infinity of solutions\footnote{By contrast, it is well-known that the
minimization of the Hilbertian energy $\|f'\|^2_{L_2}$ results in a unique solution that matches the canonical spline interpolator for $x \in [x_1,x_{N}]$ and that is constant outside the primary interval with $f(x)=f(x_1)$ for $x\le x_1$ and $f(x)=f(x_N)$ for $x\ge x_N$.}, including some adaptive piecewise-linear splines that have fewer knots
than the canonical interpolator, the non-intuitive part being that these knots do not necessarily coincide with the abscissa of the data points. In the case of the minimization of ${\rm TV}^{(2)}(f)$, the sparsest spline solution is essentially unique and can be determined using the Debarre algorithm \cite[Theorem 2]{Debarre2020}.

\section{Representer Theorem for Constrained ${\rm TV}^{(2)}$ Minimization}
\label{Sec:TV2Theo}

We now present the theorem that provides the theoretical foundation for this paper.
It is an extension/unification of two of our earlier results \cite{Debarre2020, Aziznejad2022}. 
\rev{While Theorem \ref{Theo:SplineFitConstrained} is stated in the context of a generic 1-dimensional data-fitting problem,
we shall see in the second part of the paper how this theoretical result on the optimality of splines is applicable to the training of neuronal activations 
in deep neural networks (Section \ref{Sec:Framework}) and to the data-driven design of {(weakly-)}convex regularizers and proximal operators for image reconstruction (Section \ref{Sec:Denoising}). }

\begin{theorem}
\label{Theo:SplineFitConstrained}
Let us consider the following setting.
\begin{itemize}
\item A strictly convex and coercive function $E : \R \times \R \to \R$.
\item A series of data points $(x_m,y_m) \in \R \times \R$ with $m=1,\dots,M$ and
$x_1< \dots < x_m < x_M$.
\item An adjustable regularization parameter $\lambda \in \R^+$.
\item Two adjustable slope-excursion parameters  $s_{\min} < s_{\max} \in \R$.
\end{itemize}
Then, the solution set of the constrained functional optimization problem in
\begin{align}
\label{Eq:OptProblem}
S= \arg \min_{f \in {\rm BV}^{(2)}(\R)} \left(\sum_{m=1}^M E\big(f(x_m), y_m\big) + \lambda \|f''\|_{\Spc M} \right)\nonumber\\
\hspace*{3cm}\mbox{ s.t. }  s_{\min} \le f'(x) \le s_{\max} \mbox{ a.e.}
\end{align}
is a nonempty, convex, and weak*-compact subset of ${\rm BV}^{(2)}(\R)$ whose extreme points are 
piecewise-linear splines 
with no more than $(M-1)$ linear regions. 

Moreover, there exists a unique vector
$\M z=(z_m)\in \R^M$ such that
\begin{align}
\label{Eq:IntpolTV2}
S= \arg \min_{f \in {\rm BV}^{(2)}(\R)} \|\Dop^2 f\|_{\Spc M} \quad \mbox{ s.t. } \quad  f(x_m)=z_m,\,  m=1,\dots, M,
\end{align}
where the latter reformulation absorbs the initial slope constraints.
\end{theorem}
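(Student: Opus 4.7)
My plan is to first secure the basic properties of $S$ by direct-method arguments in ${\rm BV}^{(2)}(\R)$, then to reduce the constrained problem to a pure TV$^{(2)}$ interpolation at the correct sample values, and finally to invoke the authors' abstract representer theorem to characterize the extreme points.

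First I would establish existence, convexity, weak*-compactness, and uniqueness of the sample vector. The feasible set is nonempty (it contains any affine function with slope in $[s_{\min},s_{\max}]$), the objective is convex (sum of a convex data-fit and a seminorm), the $\Spc M$-norm is weak*-lower-semicontinuous, and the slope conditions translate into weak*-closed halfspace constraints on $f''$ via integration against nonnegative test functions. Coercivity of $E$ controls the sample values $f(x_m)$ of any minimizing sequence, and $\lambda\|f''\|_{\Spc M}$ bounds the TV$^{(2)}$ seminorm, which together with the Sobolev-type norm introduced in Section~\ref{Sec:Prelim} yields uniform boundedness in ${\rm BV}^{(2)}(\R)$. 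Banach--Alaoglu then produces a weak*-limit that inherits feasibility and achieves the infimum, so $S\ne\emptyset$; convexity and weak*-compactness of $S$ follow from convexity of the problem together with weak*-closedness and boundedness of the sublevel set. For uniqueness of $\M z=(f(x_1),\ldots,f(x_M))$, I would suppose two minimizers $f_1,f_2\in S$ disagreed at some $x_{m_0}$: strict convexity of $E(\cdot,y_{m_0})$ combined with convexity of $\|\cdot\|_{\Spc M}$ and of the slope constraint would make $(f_1+f_2)/2$ strictly better, a contradiction.

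Next I would reformulate the problem as an unconstrained TV$^{(2)}$ interpolation at $\M z$. Once $\M z$ is identified, the data term is constant on $S$, so $S$ equals the set of feasible interpolators of $\M z$ that minimize $\|f''\|_{\Spc M}$. To obtain \eqref{Eq:IntpolTV2} I must show that \emph{every} such TV$^{(2)}$-minimizer automatically satisfies $s_{\min}\le f'\le s_{\max}$ a.e.\ when the slope constraint is dropped. The inclusion $[s_{\min}(\mathbb P),s_{\max}(\mathbb P)]\subseteq[s_{\min},s_{\max}]$ is immediate: the slope inequality \eqref{Eq:Slopeinequal} applied to any $f\in S$ on each interval $[x_{n-1},x_n]$ forces $s_{\min}\le s_n(\mathbb P)\le s_{\max}$. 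The remaining step---and the main obstacle---is to show that the slope range of any \emph{unconstrained} TV$^{(2)}$-minimizer is itself contained in $[s_{\min}(\mathbb P),s_{\max}(\mathbb P)]$. Here I would argue by extremal perturbation against the canonical interpolator of Definition~\ref{Def:Canonical} (whose TV$^{(2)}$ equals ${\rm TV}^{(2)}(\mathbb P)$ by \eqref{Eq:TV2spline}): any excursion of $f'$ above $s_{\max}(\mathbb P)$ or below $s_{\min}(\mathbb P)$ on a subinterval would require compensating slope changes elsewhere to meet the interpolation conditions, and a careful accounting of the slope-jump sum would show that this strictly inflates ${\rm TV}^{(2)}(f)$, contradicting optimality. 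This quantitative refinement of the characterization in \cite{Debarre2020} is the technical heart of the theorem.

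Finally, with \eqref{Eq:IntpolTV2} in hand, the problem reduces to minimizing $\|f''\|_{\Spc M}$ subject to $M$ point-sampling constraints on a space whose regularizer has a two-dimensional null space (the affine functions). The generalized representer theorem for $\Spc M$-norm regularization then guarantees that the extreme points of the solution set are generated by an affine function plus at most $(M-2)$ Dirac atoms in $f''$, \ie{} CPWL splines with at most $(M-2)$ knots and hence at most $(M-1)$ linear regions, concluding the proof.
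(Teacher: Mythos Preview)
Your overall architecture---existence by the direct method, uniqueness of $\M z$ by strict convexity, reduction to a TV$^{(2)}$ interpolation problem, and then the representer theorem for the extreme points---matches the paper. The substantive divergence is in the step you correctly flag as ``the technical heart'': showing that every \emph{unconstrained} TV$^{(2)}$-minimal interpolator of $\mathbb P=\{(x_m,z_m)\}$ already has its derivative confined to $[s_{\min}(\mathbb P),s_{\max}(\mathbb P)]$.

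You propose a direct variational argument (slope excursions force ``compensating slope changes'' that ``strictly inflate'' TV$^{(2)}$) and describe it as a ``quantitative refinement'' of \cite{Debarre2020}. The paper does something different and, in effect, simpler: it invokes \cite[Theorem~2]{Debarre2020} \emph{as stated}, which already provides a complete structural description of every minimizer $f^\ast$---namely, $f^\ast$ coincides with the canonical interpolator outside $(x_2,x_{M-1})$, and on each remaining interval $[x_m,x_{m+1}]$ it is either affine, or convex on the extended interval $[x_{m-1},x_{m+2}]$, or concave there. The slope bound then follows in one line from the monotonicity of difference quotients of convex/concave functions: on such an interval ${f^\ast}'$ is trapped between the adjacent data slopes $s_m$ and $s_{m+2}$, hence globally between $s_{\min}(\mathbb P)$ and $s_{\max}(\mathbb P)$. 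No new inequality or refinement is required. Your ``careful accounting'' remains a placeholder for what would otherwise be the hardest part of the proof, and it is not clear it can be carried out without essentially re-deriving Debarre's trichotomy; the paper's route sidesteps this entirely.

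A smaller remark: your weak*-closedness argument for $C_{\rm slope}$ is phrased as ``halfspace constraints on $f''$,'' which is not quite right since the constraint is on $f'$. The paper instead rewrites the slope bound in the integrated two-point form \eqref{Eq:InequalSlope} on $f$ itself and then uses only the weak*-continuity of the point evaluations $f\mapsto f(x)$ on ${\rm BV}^{(2)}(\R)$ (from \cite{Unser2019c}) to pass to limits---a cleaner device that avoids any discussion of how $f'$ or $f''$ behave under weak* convergence.
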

Since $f \in {\rm BV}^{(2)}(\R)$ is Lipschitz-continuous, we can also formulate the constraint on the derivative as
\begin{align}
\label{Eq:InequalSlope}
s_{\rm min}\; (x_2- x_1)\le f(x_2)-f(x_1) \le s_{\rm max}\; (x_2- x_1)
\end{align}
for all $x_2,x_1 \in \R$ with $x_2>x_1$, without loss of generality. This form is more tractable mathematically because it holds everywhere on $\R$ (as opposed to the {\em almost everywhere} statement on the derivative of $f$).

\begin{proof}
To prove existence, we reformulate the problem as an unconstrained optimization
by augmenting the cost with the barrier functional $i_{C_{\rm slope}}$
where \rev{the set of constraints} is
\begin{align}
C_{\rm slope}=\{f \in {\rm BV}^{(2)}(\R) \mbox{ subject to  } \eqref{Eq:InequalSlope}\}
\end{align}
and 
\begin{align}
i_{C}\eqdef \begin{cases}
0,& \mbox{if } f \in C\\
+\infty,&\mbox{ otherwise.}
\end{cases}
\end{align}
It then suffices to show that the augmented functional $$J_{\rm slope}(f)=\sum_{m=1}^M E\big(f(x_m), y_m\big) + \lambda \|\Dop^2 f\|_{\Spc M} + i_{C_{\rm slope}}$$ is coercive and lower-semicontinuous in the weak* topology
of ${\rm BV}^{(2)}(\R)$. We already know from previous work that $J(f)=\sum_{m=1}^M E\big(f(x_m), y_m\big) + \lambda \|\Dop^2 f\|_{\Spc M}$ is coercive and lower-semicontinuous (see \cite[proof of Theorem 4 with $\Lop=\Dop^2$]{Gupta2018}). The fact that $i_{C_{\rm slope}}$ is non-negative directly implies that $J_{\rm slope}(f)$ is coercive as well.
The only missing ingredient is the lower semicontinuity of $i_{C_{\rm slope}}$, which is automatically met if the constraint box $C_{\rm slope}$ is closed.

To prove that $C_{\rm slope}$ is a weak*-closed convex subset of ${\rm BV}^{(2)}(\R)$, we now consider some sequence $(f_n)_{n \in \N}$ of functions  in $C_{\rm slope}$ that converge to
$f_{\rm lim}$ in the weak* topology. 
For any $n\in \N$ and $x_2 > x_1 \inR$, we have that
\begin{align*}
f_{\lim} (x_2)-f_{\lim}(x_1)-\underbrace{\big(f_{\lim} (x_2)-f_{n}(x_2)\big)}_{\epsilon_n(x_2)}+ \underbrace{\big(f_{\lim} (x_1)-f_{n}(x_1)\big)}_{\epsilon_n(x_1)}
= f_n(x_2)-f_n(x_1)
\end{align*}
which, in view of \eqref{Eq:InequalSlope}, yields the inequality
\begin{align}
\label{Eq:Inequaln}
s_{\min}\;(x_2-x_1)\le f_{\lim} (x_2)-f_{\lim}(x_1)-\epsilon_n(x_2) +\epsilon_n(x_1)\le  s_{\max}\;(x_2-x_1).
\end{align}
Since the sampling functional  $\delta(\cdot - x_m): f \mapsto f(x_m)$ is weak*-continuous in ${\rm BV}^{(2)}(\R)$ for any $x_m \in \R$ (see \cite[Theorem 1, Item 2]{Unser2019c}),
we have that $f_n(x_2)\to f_{\lim}(x_2)$ and $f_n(x_1)\to f_{\lim}(x_1)$  as $n \to \infty$ , which is equivalent to $\lim_{n \to \infty}\epsilon_n(x_2)=0$ and $\lim_{n \to \infty}\epsilon_n(x_1)=0$. The desired bound is the limit form of 
\eqref{Eq:Inequaln} as $n\to \infty$, which ensures that $f_{\lim} \in C_{\rm slope}$ (closedness property).

Since our problem admits at least one minimizer and since the data term in \eqref{Eq:OptProblem} is strictly convex, we can use a standard argument in convex analysis to show that there exists a unique
$\M z \inR^M$ such that $f^\ast(x_m)=z_m$ for all $f^\ast \in S$. 
This allows us to rewrite \eqref{Eq:OptProblem} as the solution set of the (constrained) interpolation problem
\begin{align}
\label{Eq:IntpolTV2b}
\arg \min_{f \in C_{\rm slope}} \|\Dop^2 f\|_{\Spc M} \quad \mbox{ s.t. } \quad  f(x_m)=z_m,\,  m=1,\dots, M.
\end{align}
Now, the equivalence between
\eqref{Eq:IntpolTV2} and \eqref{Eq:IntpolTV2b} is not obvious because \eqref{Eq:IntpolTV2} involves the much larger search space ${\rm BV}^{(2)}(\R)$ that does not explicitly impose the slope constraint.

The last part of the proof is to show that \eqref{Eq:IntpolTV2}, whose complete solution set has been characterized in \cite{Debarre2020}, implicitly imposes the constraint via the proper adjustment of the vector $\M z$. To that end,
we consider a generic member $f^\ast \in S$ of the solution set 
with the unconstrained problem being parametrized by $\mathbb{P}=\big\{(x_m,z_m)\big\}_{m=1}^M$.
We know from \cite[Theorem 2]{Debarre2020} that $f^\ast(x)$ coincides with the canonical interpolator $f_{{\rm int},\mathbb{P}}(x)$ for $x \notin (x_2, x_{M-1})$. As for each of the remaining intervals $[x_{m}, x_{m+1}]$, there are three possible scenarios: (i) $f^\ast$ follows $f_{{\rm int},\mathbb{P}}$ exactly; (ii) $f^\ast$ is convex over the extended interval $[x_{m-1} , x_{m+2}]$; or (iii) $f^\ast$ is concave over $[x_{m-1} , x_{m+2}]$. Let $m$ be the index of an interval $[x_{m},x_{m+1}]$ over which $f^\ast$ deviates from $f_{\mathbb{P}}$. 
The convexity of $f^\ast$ in Scenario (ii) is equivalent to $\frac{f^\ast(x) -f^\ast(y)}{x-y}$ 
 being monotonically nondecreasing in $x$ for every fixed $y$, or vice versa.
The latter property implies that $s_m\le s_{m+1} \le s_{m+2}$ and
\begin{align}
s_m=\frac{z_m-z_{m-1}}{x_m-x_{m-1}}\le \frac{f^\ast(x) -f^\ast(y)}{x-y} \le \frac{z_{m+2}-z_{m+1}}{x_{m+2}-x_{m+1}}=s_{m+2},
\end{align}
for any $x,y \in [x_{m},x_{m+1}]$ with $x\ne y$.
Likewise, one gets the reverse inequalities when $f^\ast$ is concave. 

It is also possible to state these conditions in terms of derivatives.
\begin{enumerate}
\item If $f^\ast$ is convex over $[x_m,x_{m+1}]$, then its derivative ${f^{\ast}}'$ is 
nondecreasing with $s_m \le {f^{\ast}}'(x) \le s_{m+2}\  \mathrm{a.e.}$

\item If $f^\ast$ is affine (i.e., both convex and concave) over $[x_m,x_{m+1}]$,
then ${f^{\ast}}'(x)=s_{m+1}$.
\item If $f^\ast$ is concave over $[x_m,x_{m+1}]$,
then ${f^{\ast}}'$ is 
non-increasing with $s_{m+2} \le {f^{\ast}}'(x) \le s_{m} \ \mathrm{a.e.}$, where $s_{m+2}\le s_{m+1} \le s_{m}$.
\end{enumerate}
The bottom line is that all the members $f^\ast$ of the solution set, including the canonical interpolator, tightly fulfill the slope inequality $s_{\min}=s_{\min}(\mathbb{P}) \le {f^\ast}'(x) \le s_{\max}=s_{\max}(\mathbb{P})$, where the two constants are now explicitly connected to $\M z$. 
Since there is no function among all possible interpolators that achieves a tighter slope excursion (see Section \ref{Sec:Canonical}), we can drop the slope constraint in the interpolation reformulation of the problem. 
The convexity and weak*-compactness of $S$ and the form of its extreme points then directly follow from  \cite[Theorem 1]{Debarre2020} (see also \cite{Unser2017} with $\Lop=\Dop^2$).
\end{proof}

\section{Scalar Potentials Related to Linear Splines}
\label{Sec:Potentals}

A function $\phi: \R \to \overline{\R}\eqdef\R \cup\{+ \infty\}$ that is proper
, lower-semicontinuous (l.s.c.), and convex (respectively, $\rho$-weakly convex) is called a scalar potential. 
For the precise definition of these properties, the reader is referred to the appendix, which provides a summary of the primary concepts of finite-dimensional convex analysis.

Of special relevance to us is the proximity operator of a $\rho$-weakly convex potential $\phi: \R \to \overline{\R}$ with $0\le \rho<1$, which is defined as
\begin{align}
\label{Eq:Prox}
{\rm prox}_\phi(x)=\arg \min_{z \inR} \left( \tfrac{1}{2}|x-z|^2 + \phi(z)\right).
\end{align}
Let us observe that the functional on the right-hand side of \eqref{Eq:Prox} is coercive, l.s.c., and strictly convex, which guarantees the existence and uniqueness of the minimizer.

In the sequel, we shall investigate two scenarios: (i) the case where $\phi'$ is a (learned) linear spline; and (ii) the case where
$\mathrm{prox}_\phi$ is a linear spline, \rev{within the respective philosophies of \cite{Chen2017} and \cite{Nguyen2018}}. 
The important point is that we can control the convexity properties of $\phi$ by imposing suitable monotonicity/stability constraints on either $\psi=\phi'$ or $f=\mathrm{prox}_\phi$, \rev{as summarized in Proposition \ref{Prop:Convexity}.
The main outcome is that we shall be able to enforce the required conditions within the framework of Theorem \ref{Theo:SplineFitConstrained}.}

\rev{A function $f: \R \to \R$ is said to be monotone (or nondecreasing) if $f(y) \ge f(x)$ for all $y>x \in \R$.
Mathematically, this condition can also be expressed as $\big(f(y)-f(x)\big)(y-x)\ge 0$, in adequacy with Item 1 in Definition \ref{Def:Operatorprops}, Appendix A.3.}
\rev{
\begin{proposition}
\label{Prop:Convexity}
Let $\phi: \R \to \overline{\R}$ be a proper l.s.c.\ scalar potential. Then, we have the following equivalences, with the conditions holding for all $x,y \in {\rm dom}(\phi)=\{x \in \R:-\infty< \phi(x)<+\infty\}$.
\begin{enumerate}
\item $\phi$ is convex: $\phi(\theta x + (1-\theta) y)\le \theta \phi(x) + (1-\theta)\phi(y)$ for any $\theta \in [0,1]$.
\item $R_\phi(x,y)=\frac{\phi(x)-\phi(y)}{x-y}$ is monotone
in $x$ for any fixed $y$ (or vice versa).
\item $f=\mathrm{prox}_\phi \in {\rm Lip}(\R)$ with 
$0\le f'(x)\le 1 \ \mathrm{a.e.}$
%
\item (under the additional assumption that $\phi$ is differentiable)\\ $\phi(y)\ge \phi(x)+\phi'(x) (y-x)$.
\item (under the additional assumption that $\phi$ is differentiable)\\ $\psi=\phi'$ is monotone.
\item (under the additional assumption that $\phi$ is differentiable with $\phi' \in {\rm Lip}(\R)$)\\ 
$\phi''(x)\ge 0 \ \mathrm{a.e.}$
\end{enumerate}
Likewise, if  $\phi' \in {\rm Lip}(\R)$ (as in Item 6), then $\phi$ is $\rho$-weakly convex if and only if $\phi''(x)\ge -\rho \ \mathrm{a.e.}$
\end{proposition}
Items 4 and 6 are the scalar transcriptions of the classic first-order (see \eqref{Eq:firstorder}, Appendix A.1) and second-order conditions of convex optimization  \cite[p 69-71]{Boyd2004convex}. Item 3 is equivalent to $\phi$ being {\em firmly non-expansive} (see Definition \ref{Def:Operatorprops}, Item \ref{Item:Firm} in Appendix A.3), which is the necessary and sufficient condition for a proper l.s.c.\ function to be convex \cite{Nguyen2018,Gribonval2020}. The equivalence in Item 2 is specific to the univariate setting and implies the other ones, as we briefly show below. 
\begin{proof}[Sketch of Proof]
\item $2 \Leftrightarrow 1$: For any fixed $y$, the monotonicity condition in Item 2 can be stated as
\begin{align}
\label{Eq:Convex2}
R_\phi(z,y)=\frac{\phi\big(z\big)-\phi(y)}{z-y}\ge \frac{\phi\big(x\big)-\phi(y)}{x-y} \mbox{ for any } z > x.
\end{align}
To show the equivalence with the standard definition of convexity (Item 1), one needs to consider three distinct configurations.
\item 1) Order $z>x> y$: We set $x=y+\theta_1(z-y)=\theta_1 z +(1-\theta_1)y$ with $\theta_1=\frac{x-y}{z-y}\in(0,1)$ and write
\eqref{Eq:Convex2} as $\theta_1\big(\phi(z)-\phi(y)\big) \ge \phi\big(\theta_1 z + (1-\theta_1)y\big)-\phi(y)$. When $z>y$, the latter is equivalent to the convexity condition in Item 1.
\item 2) Order $y>z> x$: We set $z=y+\theta_2(x-y)=\theta_2 x +(1-\theta_2)y$ with $\theta_2=\frac{z-y}{x-y}\in(0,1)$ and rewrite
\eqref{Eq:Convex2} as $\phi\big(\theta_2 x+(1-\theta_2)y)\big)-\phi(y) \le  \theta_2 \big(\phi(x)-\phi(y)\big)$, which again is equivalent to the condition in Item 1
when $y>x$.
\item 3) Order $z>y> x$: By defining $\theta_3=\frac{y-z}{x-z}
\in(0,1)$, we rewrite
\eqref{Eq:Convex2} as $ (1-\frac{1}{\theta_3})\left(\phi(z)- \phi(y)\right) \ge\phi(x)-\phi(y)$, which is itself equivalent to
$ {\left(\frac{1}{\theta_3}-1\right)}\phi(z) + \phi(x) \ge \frac{1}{\theta_3}\phi(y)$.
We then get the desired convexity relation by renormalization and substitution of $y=\theta_3 x +(1-\theta_3) z$.
\item $2 \Rightarrow 4,5$: 
If $\phi$ is differentiable, then \eqref{Eq:Convex2} is also valid in the limit as $z \to y$, which yields the desired first-order 
condition with $\lim_{z\to y}\frac{\phi(z) - \phi(y)}{z-y}=\phi'(y)$. Likewise, for $x\ge y$, we have
that $\phi'(x)\ge \frac{\phi(x)-\phi(y) }{x-y }\ge \phi'(y)$, which shows that $\phi'$ is monotone.
which indicates that $\phi'$ is monotone.
\item Item 6 and weakly-convex case:  If $\psi=\phi' \in {\rm Lip}(\R)$, then the second derivative $\phi''$ is defined almost everywhere so that $\phi$ is convex and $\phi'$ is monotone if and only if $\phi''(x)\ge 0 \ \mathrm{a.e.}$ Likewise, the $\rho$-weak-convexity of $\phi$ is  equivalent to
the convexity of $\tfrac{\rho}{2} (y-x)^2 + \phi(x)$ (see Item 4, Definition \ref{Ref:Convex}), which yields
$\rho+\phi''(x)\ge 0 \ \mathrm{a.e.}$
\end{proof}}

These considerations lead to the identification of the following configurations of interest for machine learning. 


\begin{definition}
\label{Def:Monotone}
Let $f \in {\rm Lip}(\R)$. Then, the following categorization holds with $\rho\ge0$.
\begin{enumerate}
\item $f$ is non-decreasing (monotone)$  \quad\Leftrightarrow \quad 0\le f'(x)\, \mathrm{a.e.}$
\item $f$ is firmly non-expansive$  \quad\Leftrightarrow \quad0\le f'(x)\le1\, \mathrm{a.e.}$
\item $f$ is 1-Lipschitz$  \quad\Leftrightarrow \quad-1\le f'(x)\le1\, \mathrm{a.e.}$
\item $f$ is $\rho$-strongly increasing (monotone)$  \quad\Leftrightarrow \quad\rho\le f'(x)\, \mathrm{a.e.}$
\item $f$ is $\rho$-weakly increasing $  \quad\Leftrightarrow \quad-\rho\le f'(x)\, \mathrm{a.e.}$
\end{enumerate}
\end{definition}

We also note that if $\phi: \R \to \R$ is symmetric, then ${\rm prox}_\phi(x)$ is anti-symmetric, and that the same holds true for
$\phi'(x)$ within the domain where the derivative is well-defined.

%

\subsection{Scalar Potential Specified Through its Derivative}
The generic form of a
piecewise-linear spline with knots $\tau_1 < \tau_2 < \cdots < \tau_K$ is
\begin{align}
\label{Eq:Linspline}
f_{\rm spline}(x)=b_0+ b_1 x + \sum_{k=1}^K a_k (x-\tau_k)_+,
\end{align}
where $b_0, b_1 \inR$ and $(a_k) \in \R^{K}$ are the linear weights of the model.
One can readily verify that the derivative of $f$ is piecewise-constant with its range being constrained by
\begin{align}
s_{\min}=\inf_{x\in \R} f_{\rm spline}'(x) =\min\{b_1 + \sum^k_{n=1} a_n\}^K_{k=1}\label{Eq:sminspline}\\
s_{\max}=\sup_{x\in \R} f_{\rm spline}'(x)=\max\{b_1 + \sum^k_{n=1} a_n\}^K_{k=1}\label{Eq:smaxspline}.
\end{align}
The other relevant property is that the second-order total variation of a linear spline is given the $\ell_1$-norm of its (ReLU) coefficients as
\begin{align}
\label{Eq:TV2discrete}
\mathrm{TV}^{(2)}(f_{\rm spline})=\|f''_{\rm spline}\|_{\Spc M}=\| \sum_{k=1}^K a_k \delta(\cdot-\tau_k)\|_{\Spc M}=\sum_{k=1}^K |a_k|=\|\M a\|_{\ell_1}.
\end{align}
Since $\ell_1$-norm minimization promotes sparsity, the penalization of $\mathrm{TV}^{(2)}(f_{\rm spline})$
tends to reduce the number of active knots of the spline.

 \begin{proposition}[Spline derivative of a (weakly) convex potential]
\label{Proposition:Derivative}
Consider the generic linear spline 
\eqref{Eq:Linspline}
with knots $\tau_1 < \tau_2 < \cdots < \tau_K$ and expansion coefficients $b_0,b_1, a_1, \dots, a_K\in \R$.
Then, there exists a unique quadratic-spline potential function $\phi: \R \to \R$
such that $f_{\rm spline}(x)=\phi'(x)$ and $\phi(0)=0$ (without loss of generality). Depending on the value of
$s_{\min}$ given by
\eqref{Eq:sminspline},
the potential $\phi$ is endowed with the following properties.
\begin{enumerate}
\item If $s_{\min}\ge0$, then $\phi$ is convex $\quad \Leftrightarrow \quad$ $f_{\rm spline}$ is nondecreasing
\item If $s_{\min}>0$, then $\phi$ is $s_{\min}$-strongly convex $\quad \Leftrightarrow \quad$ $f_{\rm spline}$ is $s_{\min}$-strongly increasing.
\item If $s_{\min}<0$, then $\phi$ is $|s_{\min}|$-weakly convex $\quad \Leftrightarrow \quad$ $f_{\rm spline}$ is $|s_{\min}|$-weakly increasing.
\end{enumerate}
\end{proposition}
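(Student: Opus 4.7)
The plan is to construct $\phi$ explicitly by integration, verify that it is a quadratic spline with the same knot set, establish uniqueness, and then reduce each of the three claimed equivalences to an a.e.\ sign condition on $\phi''$.

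First I would define
\[
\phi(x)=\int_0^x f_{\rm spline}(t)\,\dint t,
\]
which is well defined since $f_{\rm spline}$ is continuous. Integrating the representation \eqref{Eq:Linspline} term by term yields
\[
\phi(x)=b_0\,x+\tfrac{b_1}{2}x^2+\tfrac{1}{2}\sum_{k=1}^K a_k\!\left[(x-\tau_k)_+^2-(-\tau_k)_+^2\right],
\]
which is piecewise-quadratic with breakpoints exactly at $\tau_1<\dots<\tau_K$; hence $\phi$ is a quadratic spline. By construction $\phi(0)=0$ and $\phi'(x)=f_{\rm spline}(x)$ for every $x\in\R$ (since $f_{\rm spline}$ is continuous, not merely a.e.\ defined). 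For uniqueness, any other quadratic spline $\tilde\phi$ with $\tilde\phi'=f_{\rm spline}$ and $\tilde\phi(0)=0$ satisfies $(\phi-\tilde\phi)'\equiv 0$, so $\phi-\tilde\phi$ is a constant, which must vanish at $x=0$.

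Next I would translate the convexity properties of $\phi$ into pointwise inequalities on its second derivative. Because $\phi'=f_{\rm spline}\in{\rm Lip}(\R)$, the second derivative $\phi''$ exists almost everywhere and coincides a.e.\ with $f_{\rm spline}'$, which is the piecewise-constant function whose essential range lies in $[s_{\min},s_{\max}]$ (see \eqref{Eq:sminspline}--\eqref{Eq:smaxspline}). Invoking the standard characterization of (strong/weak) convexity for $C^1$ functions with a.e.\ defined second derivative, we have
\begin{align*}
\phi\ \text{convex}\ &\Leftrightarrow\ \phi''(x)\ge 0\ \mathrm{a.e.},\\
\phi\ \mu\text{-strongly convex}\ &\Leftrightarrow\ \phi''(x)\ge \mu\ \mathrm{a.e.},\\
\phi\ \rho\text{-weakly convex}\ &\Leftrightarrow\ \phi''(x)\ge -\rho\ \mathrm{a.e.},
\end{align*}
each obtained by applying ordinary convexity to $\phi(x)\mp\frac{c}{2}x^2$ for the appropriate constant $c$.

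Finally I would match cases. In Case (1), $s_{\min}\ge 0$ means $f_{\rm spline}'\ge 0$ a.e., equivalently $f_{\rm spline}$ is nondecreasing (Definition \ref{Def:Monotone}, Item 1), which is the same as $\phi''\ge 0$ a.e., i.e.\ $\phi$ is convex. In Case (2), $s_{\min}>0$ gives $\phi''\ge s_{\min}>0$ a.e., equivalent to $s_{\min}$-strong convexity of $\phi$ and to $s_{\min}$-strong monotonicity of $f_{\rm spline}$ (Item 4). In Case (3), $s_{\min}<0$ yields $\phi''\ge s_{\min}=-|s_{\min}|$ a.e., equivalent to $|s_{\min}|$-weak convexity of $\phi$ and to $|s_{\min}|$-weak monotonicity of $f_{\rm spline}$ (Item 5). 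The chain of equivalences holds because $s_{\min}$ is precisely the essential infimum of $\phi''$, so the inequality $\phi''\ge c$ a.e.\ is \emph{tight}: it holds iff $c\le s_{\min}$. No step presents any real obstacle; the only point requiring mild care is the bookkeeping of the constant in the explicit formula for $\phi$ to secure the normalization $\phi(0)=0$ and to verify that it is a genuine quadratic spline with the prescribed knots.
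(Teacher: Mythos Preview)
Your proposal is correct and follows essentially the same approach as the paper: integrate $f_{\rm spline}$ explicitly to obtain the quadratic-spline primitive (your integration constant matches the paper's $c_0=-\sum_k\frac{a_k}{2}(-\tau_k)_+^2$), then reduce the convexity claims to the sign of $\phi''=f_{\rm spline}'$ via Definition~\ref{Def:Monotone}. If anything, your treatment is slightly more careful than the paper's in arguing uniqueness and in noting the tightness of the bound $\phi''\ge s_{\min}$ needed for the biconditionals.
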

\begin{proof}
The potential $\phi(x)$ is found by calculating the primitive of \eqref{Eq:Linspline}, which 
is
\begin{align}
\label{Eq:spline2}
\phi(x)= c_0 + b_0 x + \frac{b_1}{2} x^2 + \sum_{k=1}^K \frac{a_k}{2}(x-\tau_k)_+^2
\end{align}
where $c_0$ is an integration constant that is set to $c_0=-\sum_{k=1}^K\frac{a_k}{2}(-\tau_k)_+^2$ to fulfill the boundary condition $\phi(0)=0$. As it turns out, $\eqref{Eq:spline2}$ is the generic form of a quadratic spline
with knots at the $\tau_k$. Indeed, $\phi(x)$ is differentiable twice with
\begin{align}
\label{Eq:phipp}
\phi''(x)=f'_{\rm spline}(x)=\sum_{k=1}^K a_k\Indic_+(x-\tau_k),
\end{align}
where $\Indic_+(x)=\begin{cases} 1,& x\ge 0\\
0, &x<0\end{cases}$ is the unit-step function, which reveals that $\phi''$ is constant on each subinterval $[\tau_{k-1},\tau_{k})$ for $k=1,\dots,K+1$ with the convention that $\tau_0=-\infty$ and $\tau_{K+1}=+\infty$.
The latter property is consistent with $\phi$ itself being a quadradic polynomial on each of these subintervals.

As for the convexity of $\phi$, we rely on \eqref{Eq:phipp} to obtain the slope inequality
$s_{\min} \le f'_{\rm spline}(x)$ with the infimum given by \eqref{Eq:sminspline}.
The properties then directly follow from the characterizations in Definition \ref{Def:Monotone}.
\end{proof}
\subsection{Scalar Potential Specified Through its Proximity Operator}
To investigate proximal operators that are piecewise-linear maps, it is convenient to represent such maps
by a minimal set of points. For instance, the generic linear spline given by \eqref{Eq:Linspline} has a unique description in terms of its breakpoints $\big(\tau_k, f(\tau_k)\big)$ for $k=1,\dots,K$ plus two ``outside'' points $\big(\tau_0, f(\tau_0)\big), \big(\tau_{K+1}, f(\tau_{K+1})\big)$ 
where the two additional sampling locations $\tau_0<\tau_1$ and $\tau_{K+1}>\tau_{K}$ can be selected arbitrarily. The idea is that, if we know the knots of the spline, then we only need $(K+2)$ linearly independent equality constraints to uniquely determine 
the weights $b_0, b_1, a_1,\dots, a_K \in \R$ in \eqref{Eq:Linspline}.

Points provide an intuitive description of piecewise-linear curves, including those that exhibit discontinuities.
To formalize the concept, let us consider an ordered set of points $\mathbb{P}=\{(x_1,y_1), \dots, (x_N,y_N)\} \subset \R^2$ 
with $x_1 \le x_2 \le  \cdots \le x_N$. Such a set specifies a piecewise-linear function $f_{\mathrm{int},\mathbb{P}}$ whose graph is obtained by connecting all successive pairs of points by a straight line and by extrapolating the two boundary lines toward infinity. 
In this geometric setting, an off-boundary element $(x_n,y_n) \in \mathbb{P}$ is called a {\em knot point} if 
$x_{n}< x_{n+1}$ and $(x_n,y_n)$ does not lie on the straight line that joins its two immediate neighbors. By contrast, it is called a {\em jump point} if $x_{n}= x_{n+1}$ and $y_{n}\ne y_{n+1}$.
Finally, the ordered set $\mathbb{P}$ is called {\em minimal} if it contains only knot points and jump points, except for its two {\em boundary points} $(x_1,y_1)$ and $(x_N,y_N)$. Consequently, if $\mathbb{P}$ is minimal and has no jumps, then $f_{\mathrm{int}, \mathbb{P}}$ is a piecewise-linear spline that coincides with the canonical spline interpolant from Definition \ref{Def:Canonical}.

The minimal ordered set $\mathbb{P}=\{(x_n,y_n)\}_{n=1}^N$ is said to be nondecreasing if 
$y_1\le y_2 \le y_3\le  \dots  \le y_N$. In such a case, the piecewise-linear function $f_{\mathrm{int}, \mathbb{P}}$
is nondecreasing and invertible (in a set theoretical sense) with
$f^{-1}_{\mathrm{int}, \mathbb{P}}=f_{\mathrm{int}, \mathbb{P}^{-1}}$, where 
$\mathbb{P}^{-1}=\{(y_n,x_n)\}_{n=1}^N$. The latter property is the main ingredient that is used to establish our next result.

\begin{proposition}[Spline prox of a (weakly) convex potential]
\label{Proposition:weaklyconvex}
Consider the adaptive linear spline $f_{\rm spline}$ specified by \eqref{Eq:Linspline}.
If $f_{\rm spline}$ is nondecreasing,  
then there exists a unique continuous piecewise-quadratic potential $\phi: \R \to \R$ with $\phi(0)=0$ such that $f_{\rm spline}(x)={\rm prox}_\phi(x)$. 
Moreover, depending on the value of $s_{\max}$ given by \eqref{Eq:smaxspline}, the potential $\phi$ is endowed with the following properties.\\[-1ex]
\begin{enumerate}
\item If $s_{\max}\le 1$ (i.e., $f_{\rm spline}$ is firmly non-expansive), then $\phi$ is convex.
\item If $s_{\max} < 1$,  then $\phi$ is $\left(\frac{1}{s_{\max}}-1\right)$-strongly convex.
\item If $1 \le s_{\max} <\infty$, then $\phi$ is $\left(1-\frac{1}{s_{\max}}\right)$-weakly convex.
\end{enumerate}

\end{proposition}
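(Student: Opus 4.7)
The strategy is to construct $\phi$ explicitly from $f_{\rm spline}$ by inverting the standard characterization $y = {\rm prox}_\phi(x) \Leftrightarrow x - y \in \partial \phi(y)$, which is valid for any $\rho$-weakly convex $\phi$ with $\rho < 1$, and then to read off the three convexity regimes directly from the slopes of $f_{\rm spline}$. I would first represent $f_{\rm spline}$ by its minimal nondecreasing ordered set of breakpoints $\mathbb{P}=\{(x_n,y_n)\}_{n=1}^N$ in the spirit of Section \ref{Sec:Canonical}. On every linear segment where $f_{\rm spline}(x) = \alpha_k + \beta_k x$ with $\beta_k \in [0, s_{\max}]$, I would set $\phi'(y) = f_{\rm spline}^{-1}(y) - y$: for $\beta_k>0$ this reads $\phi'(y)=(1/\beta_k - 1)y - \alpha_k/\beta_k$, whose antiderivative is a quadratic branch, while a flat piece ($\beta_k=0$) at height $y_k$ becomes a kink of $\phi$ at $y_k$ with subdifferential $\partial \phi(y_k) \supseteq [x_{k-1}-y_k, x_k-y_k]$. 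Continuity of $\phi$ at the joints uniquely determines the per-piece integration constants up to one global constant, which is then pinned by $\phi(0)=0$.

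Next I would verify that $f_{\rm spline} = {\rm prox}_\phi$ pointwise. By construction, $\partial \phi$ agrees with the set-valued map $y \mapsto f_{\rm spline}^{-1}(y) - y$. For any $x \in \R$, setting $y = f_{\rm spline}(x)$ yields $x \in f_{\rm spline}^{-1}(y)$, hence $x - y \in \partial \phi(y)$, which is precisely the first-order optimality condition for the prox. Uniqueness of the minimizer of $z \mapsto \tfrac{1}{2}(x-z)^2 + \phi(z)$ comes from strict convexity of this objective; uniqueness of $\phi$ itself comes from the fact that any other potential producing the same prox must share the same subdifferential on $\R$ and hence coincide with $\phi$ up to an additive constant removed by the normalization $\phi(0)=0$.

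The three convexity statements then reduce to an elementary slope computation. On a piece with slope $\beta_k \in (0, s_{\max}]$, the slope of $\phi'$ equals $(1/\beta_k - 1)$, and a flat piece of $f_{\rm spline}$ produces a positive jump of $\phi'$ which only reinforces convexity. The infimum of the slopes of $\phi'$ is therefore exactly $1/s_{\max} - 1$. If $s_{\max}\le 1$ this infimum is $\ge 0$, so $\phi'$ is nondecreasing and $\phi$ is convex; if $s_{\max}<1$ the infimum is strictly positive, yielding strong convexity of modulus $1/s_{\max}-1$; if $s_{\max}\ge 1$ the infimum equals $-(1-1/s_{\max})$, so the shifted potential $\phi + \tfrac{1}{2}(1 - 1/s_{\max})|\cdot|^2$ has all derivative-slopes $\ge 0$ and is convex, which is the definition of $(1-1/s_{\max})$-weak convexity of $\phi$.

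The delicate point, which I expect to be the main technical hurdle, is the weakly convex regime $s_{\max}>1$: one must confirm that the constructed $\phi$ truly admits a well-defined single-valued prox that coincides with $f_{\rm spline}$, since weakly convex potentials with modulus $\ge 1$ can have ill-posed proximal maps. The hypothesis $1 \le s_{\max} < \infty$ guarantees that the weak-convexity modulus $1 - 1/s_{\max}$ is strictly less than $1$, making $\tfrac{1}{2}(x-z)^2 + \phi(z)$ a strictly convex coercive function of $z$ (of strong-convexity modulus $1/s_{\max}$), so the subdifferential inclusion $x-y \in \partial \phi(y)$ is both necessary and sufficient for $y$ to be the minimizer. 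The other item requiring care is the geometric bookkeeping at flat pieces of $f_{\rm spline}$, where the inverse $f_{\rm spline}^{-1}$ is multivalued and $\phi$ becomes non-smooth; but once the local identification $\partial \phi(y_k) \supseteq [x_{k-1}-y_k, x_k-y_k]$ is in place, the rest of the argument is a direct verification.
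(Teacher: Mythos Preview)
Your proposal is correct and follows essentially the same route as the paper: represent $f_{\rm spline}$ by its minimal ordered set, use the resolvent identity ${\rm prox}_\phi=(\Identity+\partial\phi)^{-1}$ to identify $\partial\phi(y)=f_{\rm spline}^{-1}(y)-\{y\}$ as the piecewise-linear map with slopes $1/s_n-1$, integrate to obtain a piecewise-quadratic $\phi$, and read off the convexity regimes from $\inf_n(1/s_n-1)=1/s_{\max}-1$. If anything, you are slightly more careful than the paper in explicitly handling the flat-segment case (where $\phi'$ has a jump) and in verifying that the prox is well-posed in the weakly convex regime $s_{\max}>1$; the paper addresses the former only in a remark after the proof and leaves the latter implicit.
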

\begin{proof} 
As explained above, we represent $f_{\rm spline}$ by the minimal ordered set $\mathbb{P}=\{(x_n,y_n)\}_{n=1}^N$ with $N=K+2$, $x_1=\tau_1-1$, $x_{n+1}=\tau_{n}$ for $n=1,\dots,K$, $x_{N}=\tau_K+1$, and $y_n=f_{\rm spline}(x_n)$.
Since $f_{\rm spline}=f_{\mathrm{int}, \mathbb{P}}$ is nondecreasing and piecewise-linear, it is invertible with $f^{-1}_{\mathrm{int}, \mathbb{P}}$ being piecewise-linear as well.
Let us now assume that $f_{\mathrm{int}, \mathbb{P}}(x)$ is the unique minimizer of $\tfrac{1}{2}(x-y)^2 + \phi(y)$, where $\phi$ is $\rho$-weakly convex.
The resolvent identity $f_{\mathrm{int}, \mathbb{P}}={\rm prox}_\phi=(\Identity + \partial \phi)^{-1}$ then yields
$ \partial \phi(y) =f^{-1}_{\mathrm{int}, \mathbb{P}}(y)-\{y\}$, where the subdifferential $\partial \phi$ (see Appendix A.3) can also be identified as the (single-valued) derivative $\phi'$. 
In other words, $\phi'$ is the piecewise-linear function
represented by the ordered set $\{(y_n,x_n-y_n)\}_{n=1}^N$, which is also minimal.
In particular, for $y \in [y_{n-1},y_{n}]$ (under the assumption that $y_{n}\ne y_{n-1}$), we have that
\begin{align*}
\phi'(y) &= (x_{n-1}-y_{n-1})+ 
 \frac{x_n-y_n - (x_{n-1}-y_{n-1}) }{y_n - y_{n-1} }   (y-y_{n-1} )\\
& =(x_{n-1}-y_{n-1}) + \left( \frac{1}{s_n} - 1 \right) (y-y_{n-1} )
\end{align*}
where $s_n= \frac{y_n-y_{n-1}}{x_n-x_{n-1}}>0$. This implies that $\phi''$ (the second derivative of $\phi$) is piecewise-constant with
\begin{align}
-1 < \frac{1}{s_{\max}} - 1 = \min\{\frac{1}{s_n} -1 \}_{n=2}^N   \le \phi''(y)
\end{align}
for almost all $y \in \R$ (except at the jumps where $y_{n-1}=y_{n}$), which confirms that $\phi$ is $\rho$-weakly convex. 
We can then apply the same method as in Proposition \ref{Proposition:Derivative} to identify the potential $\phi$, which is piecewise-quadratic and $\rho$-weakly convex with $\rho=\min(0,1-\frac{1}{s_{\max}}) < 1$. Moreover if $s_{\max}<1$, then $\phi$ is $\left(\frac{1}{s_{\max}}-1\right)$-strongly convex.
\end{proof}
We note that $\phi$ in Proposition \ref{Proposition:weaklyconvex} is a quadratic spline if and only if $f_{\mathrm{spline}}$ is 
\rev{strictly increasing}.
Otherwise, the corresponding $\phi'(y)$ will exhibit discontinuities at the critical points where $y_{n-1}=y_{n}$, which implies that $\phi$ is only differentiable once. 
An instructive example is the soft-threshold with parameter $\lambda$ which, in our formulation, is encoded as
$f_{\mathrm{int},\mathbb{P}_1}$ with 
$\mathbb{P}_1=\{(-\lambda -1,-1),(-\lambda,0),(\lambda,0),(\lambda + 1,1)\}$. The latter is a nondecreasing linear spline, albeit not a (strictly) increasing one. The corresponding derivative of the potential is $\phi_1'=
f_{\mathrm{int},\mathbb{P}'_1}$, where $\mathbb{P}'_1=\{(-1,-\lambda),(0,-\lambda),(0,\lambda),(1, \lambda)\}$ so that $\phi'(y)=\lambda\;\mathrm{sign}(y)$ exhibits a discontinuity at the origin. 
By  integration, we recover the well-known result $\phi_1(y)=\lambda |y|$ which, in the present setting, can be viewed as a borderline case of a continuous, piecewise-quadratic potential.

As in the previous example, it is often useful to reweight the strength of a regularizing potential via the use of a multiplicative factor $\lambda>0$.
In a Bayesian setting, $\lambda$ is typically set in proportion to the variance of the measurement noise (uncertainty). In our first scenario where the learned spline $f_\mathrm{spline}$ coincides with $\phi'$, the effect of such a reweighting is trivial
as $(\lambda \phi)'=\lambda \phi'=\lambda f_\mathrm{spline}$. In the second proximal setting, 
 the regularization effect of $\lambda$ is less trivial.

\begin{proposition}[Spline prox of a reweighted potential]
\label{Prop:reweightedProx}
Let $f_\mathrm{spline}=f_{\mathrm{ int}, \mathbb{P}}$ be a non-decreasing linear spline that is
described by the minimal ordered set of points $\mathbb{P}=\{(x_n,y_n)\}_{n=1}^N$.
Then, there exists a continuous piecewise-quadratic potential $\phi$ such that $f_\mathrm{spline}=\mathrm{prox}_\phi$ with
$\phi'=f_{\mathrm{int}, \mathbb{P}'}$,
where $\mathbb{P}'=\{(y_n,x_n-y_n)\}_{n=1}^N$. As for the reweighted potential $\lambda \phi$, it 
is: (i) convex for any $\lambda>0$ if $s_{\max}\le 1$; or (ii) $\rho$-weakly convex with $\rho=\left(\lambda-\frac{\lambda}{s_{\max}}\right)<1$
if $s_{\max}\ge 1$ and $0< \lambda \le \frac{ s_{\max}}{s_{\max}-1}$. In each of these cases,
\begin{align}\mathrm{prox}_{\lambda\phi}=f_{\mathrm{int}, \mathbb{P}_\lambda} \quad \mbox{ with }\quad
\mathbb{P}_\lambda=\big\{\big(\lambda x_n +(1-\lambda) y_n ,y_n\big)\big\}_{n=1}^N,
\end{align}
which is a nondecreasing linear spline as well.

\end{proposition}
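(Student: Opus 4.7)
The plan splits into two parts: (A) verifying the (weak) convexity claims for $\lambda\phi$, and (B) deriving the explicit spline formula for $\mathrm{prox}_{\lambda\phi}$.

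For part (A), I would invoke Proposition \ref{Proposition:weaklyconvex}, which gives that $\phi$ is piecewise-quadratic with $\phi''(y)\ge \frac{1}{s_{\max}}-1$ almost everywhere (the bound being saturated whenever some $s_n=s_{\max}$). Scaling by $\lambda>0$ yields $(\lambda\phi)''(y)\ge \frac{\lambda}{s_{\max}}-\lambda$ a.e., so that $\lambda\phi$ is $\rho$-weakly convex with $\rho=\lambda-\lambda/s_{\max}$. When $s_{\max}\le 1$ this lower bound is already nonnegative, so $\lambda\phi$ remains convex for every $\lambda>0$. When $s_{\max}>1$, the hypothesis $\lambda\le s_{\max}/(s_{\max}-1)$ gives $\rho\le 1$, and strict inequality yields the well-posedness of $\mathrm{prox}_{\lambda\phi}$.

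For part (B), I would use the resolvent identity $\mathrm{prox}_{\lambda\phi}=(\mathrm{Id}+\lambda\,\partial\phi)^{-1}$, which is single-valued under $\rho<1$. Since $\phi'=f_{\mathrm{int},\mathbb{P}'}$ passes through the points $(y_n,x_n-y_n)$, the map $T_\lambda \coloneqq \mathrm{Id}+\lambda\phi'$ is itself piecewise-linear with $T_\lambda(y_n)=y_n+\lambda(x_n-y_n)=\lambda x_n+(1-\lambda)y_n$. Hence the graph of $T_\lambda$ contains the points $(y_n,\lambda x_n+(1-\lambda)y_n)$, and swapping coordinates shows that the graph of $T_\lambda^{-1}$ contains $\mathbb{P}_\lambda$. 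To identify this inverse with the canonical spline $f_{\mathrm{int},\mathbb{P}_\lambda}$, I would verify: (i) the abscissae $\lambda x_n+(1-\lambda)y_n$ are nondecreasing in $n$, so that $\mathbb{P}_\lambda$ is a valid minimal ordered set---this reduces to $\lambda(s_n-1)\le s_n$ for each slope $s_n$ of $f_\mathrm{spline}$, automatic when $s_n\le 1$ and otherwise a consequence of the hypothesis combined with the monotonicity of $s\mapsto s/(s-1)$ on $(1,\infty)$; (ii) $T_\lambda$ is nondecreasing, which is equivalent to $1+\lambda\phi''(y)\ge 1-\rho\ge 0$ a.e.; and (iii) the inverse of a nondecreasing piecewise-linear map is itself piecewise-linear with breakpoints precisely at the images of the original knots, a standard geometric fact.

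The main obstacle I anticipate is the careful treatment of the jump points of $\phi'$: whenever $y_{n-1}=y_n$ (allowed since $f_\mathrm{spline}$ is merely nondecreasing, not strictly so), the classical derivative $\phi'$ is discontinuous at $y_n$ and $\partial\phi(y_n)$ becomes set-valued, so $T_\lambda$ picks up a vertical segment at $y_n$. This corresponds to a horizontal plateau of $f_{\mathrm{int},\mathbb{P}_\lambda}$ spanning the input interval $[\lambda x_{n-1}+(1-\lambda)y_{n-1},\,\lambda x_n+(1-\lambda)y_n]$. Making this rigorous requires working with the subdifferential inclusion $x\in z^\star+\lambda\partial\phi(z^\star)$ rather than the pointwise identity, but this is exactly what the canonical-spline construction of Definition \ref{Def:Canonical}, applied at $\mathbb{P}_\lambda$, encodes.
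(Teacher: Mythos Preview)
Your proposal is correct and follows essentially the same route as the paper: both identify $\mathrm{prox}_{\lambda\phi}$ as the inverse of $T_\lambda=\mathrm{Id}+\lambda\phi'$ (the paper calls it $g$), evaluate $T_\lambda$ at the nodes $y_n$ to obtain the points of $\mathbb{P}_\lambda$, verify monotonicity of $T_\lambda$ via the slope condition $\lambda+(1-\lambda)s_n\ge 0$, and read off the (weak) convexity of $\lambda\phi$ from the lower bound on $\phi''$. The only cosmetic differences are that the paper derives the formula first and the convexity afterwards, and re-computes the slopes of $\phi'$ as $1/s_n-1$ rather than citing Proposition~\ref{Proposition:weaklyconvex}; your explicit treatment of the jump points $y_{n-1}=y_n$ via the subdifferential is actually more careful than the paper's.
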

\begin{proof}
As $f_\mathrm{spline}=\mathrm{prox}_\phi$ is a linear spline, $\phi'$ is piecewise-linear as well (but not necessarily continuous). To determine  
$\mathrm{prox}_{\lambda\phi}(x)=\arg \min_{y \inR} \tfrac{1}{2}(x-y)^2 + \lambda \phi(y)$ with $\lambda>0$, we use the optimality condition $0 \in \{y\} + \lambda \partial \phi(y)$ together with the explicit characterization
$\phi'=f_{\mathrm{int}, \mathbb{P}'}$ from the proof of Proposition \ref{Proposition:weaklyconvex}. This allows us to identify
$\mathrm{prox}_{\lambda\phi}$ as the inverse of the function $g(y)=y + \lambda \phi'(y)$, which is piecewise-linear and specified by the ordered set $\{\big(y_n,y_n+\lambda (x_n - y_n )\big)\}_{n=1}^N$. 
The condition for $g$ to be nondecreasing (and, hence, invertible) is
$y_{n-1}+\lambda (x_{n-1} - y_{n-1} )>y_{n}+\lambda (x_{n} - y_{n} )$ for $n=2,\dots,N$, which is equivalent to
\begin{align}
\label{Eq:sninequal}
s_n + \lambda (1 - s_n) >0
\end{align}
where $s_n= \frac{y_n-y_{n-1}}{x_n-x_{n-1}}\ge0$. Condition \eqref{Eq:sninequal}
 is satisfied for all $\lambda>0$ if $0\le s_n \le s_{\max} \le 1$. By contrast, if
$s_{\max}>1$, then we need to restrict $\lambda$ to the range $\left(0, \frac{s_{\max}}{s_{\max}-1}\right)$.
Within the range where $g$ is invertible, we find that
$\mathrm{prox}_{\lambda\phi}=g^{-1}$ is the piecewise-linear function specified by the ordered set  
$\mathbb{P}_\lambda=\big\{\big(\lambda x_n +{(1-\lambda) y_n} ,y_n\big)\big\}_{n=1}^N$, which also happens to be a spline since the new sampling locations $\lambda x_n +(1-\lambda) y_n$ are ordered and distinct (as direct consequence of the monotonity hypotheses $x_n< x_{n+1}$ and  $y_n\le y_{n+1}$).

As for the convexity properties of $\lambda \phi$, these are ruled by the monotonicity properties
of $\phi'=f_{\rm int, \mathbb{P'}}$ whose slopes are $s_n'=\left(\frac{1}{s_n}-1\right)$ for $n=2,\dots,N$ (see proof of Proposition \ref{Proposition:weaklyconvex}).
Consequently, $\inf_{y \inR }\lambda \phi'(y)=\min\{\lambda s'_n\}_{n=2}^N=\left(\frac{\lambda}{s_{\max}}-\lambda\right)$,
while $\sup_{y \inR }\lambda \phi'(y)=\left(\frac{\lambda}{s_{\min}}-\lambda\right)$.
This allows us to infer that the underlying potential $\lambda \phi$ will be convex for any $\lambda>0$ if and only if
$s_{\max}\le 1$. Likewise, if $s_{\max}>1$ and $0< \lambda \le \frac{ s_{\max}}{s_{\max}-1}$, then
$\lambda \phi$ with be $\rho$-weakly convex with $\rho=\left(\lambda-\frac{\lambda}{s_{\max}}\right)<1$.
\end{proof}

The interest of Proposition \ref{Prop:reweightedProx} is that it can help us adapt a given spline proximal operator to other experimental conditions. For instance, if the spline nonlinearity was trained as part of a Gaussian denoiser with
some fixed noise level $\sigma_1^2$, we can readily tune the nonlinearity to some other noise level $\sigma_2^2$ by changing the underlying potential $\phi$ to $\lambda \phi$ with $\lambda=\sigma_2^2/\sigma_1^2$. To illustrate the concept,
consider the soft-thresholding operator $\mathrm{prox}_{|\cdot|}$ which can be specified by the 
ordered set $\mathbb{P}=\{(-2,-1),(-1,0),(1,0),(2,1)\}$ with $s_{\max}=1$. Then, the application of Proposition \ref{Prop:reweightedProx} to $f=\mathrm{prox}_{|\cdot|}=f_{\mathrm{int},\{(-2,-1),(-1,0),(1,0),(2,1)\} }$
yields
$$\mathrm{prox}_{\lambda |\cdot|}=f_{\mathrm{int},\{(-\lambda-1,-1),(-\lambda,0),(\lambda,0),(\lambda+1,1)\} },$$
which is precisely the soft-threshold with parameter $\lambda$.

\section{Algorithmic framework for the Learning of Freeform Activations}
\label{Sec:Framework}

We now address the seemingly more challenging problem of learning freeform activation in deep neural networks. We first show that the theoretical result on the optimality of linear splines for fitting data subject to slope constraints (Theorem \ref{Theo:SplineFitConstrained}) can be applied to any layered architecture. 
We then present a practical way to discretize the underlying optimization problem, which can be effectively handled with the help of nonuniform B-splines.

\subsection{Learning Activations in Deep Neural Architectures}
\label{Sec:DeepExtension}
\rev{We consider a 
deep neural network of depth $L$ that is formally described as
\begin{align}
\label{Eq:DNN}
\V f_{\V \theta}=\V \sigma_L \circ \Op W_{L}\circ \V \sigma_{L-1} \circ \cdots\circ  \Op W_2 \circ \V \sigma_1\circ  \Op  W_1: \R^{N_{0}} \to \R^{N_{L}}.
\end{align}
This network results from the composition of linear transformations $\Op W_\ell: 
\V x \mapsto \M W_\ell \V x$ with $\M W_\ell \in \R^{N_{\ell}\times N_{\ell-1}}$ 
and of pointwise nonlinearities $ \V \sigma_\ell=(\sigma_{1,\ell},\dots, \sigma_{N_{\ell},\ell}): \R^{N_\ell} \to \R^{N_\ell}$, where
 $\sigma_{n,\ell}: \R \to \R$ denotes the activation of the $n$th neuron in the $\ell$th layer of the network. 
In the standard scenario, the shape of the neurons is fixed with $\sigma_{n,\ell}(x)=\sigma(x - b_{n,\ell})$, where $\sigma: \R \to \R$ is a shared profile (e.g., ReLU) and $b_{n,\ell}\in \R$ is an adjustable bias. Accordingly, the parameters of the network, collectively denoted by $\V \theta$,
consist of the linear weights $\M W_\ell$ and the biases $\M b_\ell=(b_{n,\ell})\in \R^{N_\ell}$ for $\ell=1,\dots,L$.
}

\rev{The classic training of \eqref{Eq:DNN} 
amounts to the tuning of
$\V \theta$  such that
$\V f_{\V \theta}(\M x_m) \approx\M y_m$ (without overfitting) for a representative set of data points $(\M x_m, \M y_m)\in \R^{N_{0}} \times \R^{N_{L}}$, $m=1, \dots, M$.
In practice, this is formulated as the minimization problem
\begin{align}
\V \theta^\ast \in \arg \min_{\V \theta \in \Omega, } \left(\sum_{m=1}^M E\big(\M y_m,  \M f_{\V \theta}(\M x_m)\big) + R(\V \theta) \right),
\label{Eq:deepnet}
\end{align}
where $E: \R^{N_{L}} \times \R^{N_{L}} \to \R$ is a convex loss function that quantifies the goodness of fit, $\Omega$ is the domain of acceptable weights, and
$R(\V \theta)$ is a regularization functional (such as weight decay) that makes the problem well-posed. The problem in  \eqref{Eq:deepnet} is then solved iteratively using stochastic gradient descent.
}

\rev{Our proposal is to augment the capabilities of \eqref{Eq:DNN} 
by allowing for freeform activations in the sense that we make the response curve of certain neurons 
learnable. We achieve this within the regularization framework of Theorem \ref{Theo:SplineFitConstrained}
by adding a term of the form $\lambda \sum_{(n,\ell) \in \mathbb{F}}\|\sigma''_{n,\ell}\|_{\Spc M}$ to the training loss in \eqref{Eq:deepnet}, and then by
jointly optimizing over $\V \theta \in \Omega$ and $(\sigma_{n,\ell})_{(n,\ell) \in \mathbb{F}}$ subject to the stability constraint $s_{\min}\le \sigma'_{n,\ell}(x)\le s_{\max}$.}

\rev{To prove that the optimal configuration is achieved with spline activations, we start with a single trainable neuron ($\mathbb{F}=\{(n,\ell)\}$) and denote by $\M f_{\V \theta,\sigma_{n,\ell}}: \R^{N_{0}} \to \R^{N_{L}}$ the multimensional mapping implemented by a deep neural network with weights $\V \theta$ 
and the generic activation of its $(n,\ell)$th neuron. 
This leads to the reformulation of the training problem as}
\begin{align}
(\V \theta^\ast,\sigma^\ast_{n,\ell}) \in \arg \min_{\V \theta \in \Omega, \sigma_{n,\ell} \in {\rm BV}^{(2)}(\R)} \sum_{m=1}^M E\big(\M y_m,  \M f_{\V \theta,\sigma_{n,\ell}}(\M x_m)\big) + R(\V \theta) +\lambda\|\sigma''_{n,\ell}\|_{\Spc M} \nonumber\\
\hspace*{3cm}\mbox{ s.t. }  s_{\min} \le \sigma_{n,\ell}'(x) \le s_{\max} \mbox{ a.e.}\label{Eq:deepnet2}
\end{align}
Under the assumption that \eqref{Eq:deepnet2} admits a  (not necessarily unique) minimizer, there exists a network configuration denoted by $\M f_{\V \theta^\ast,\sigma^\ast}$ that achieves the minimal cost with an optimal data term $D_{\rm opt}=\sum_{m=1}^M E(\M y_m,  \M z^\ast_m)$, where $\M z^\ast_m=\M f_{\V \theta^\ast,\sigma^\ast}(\M x_m), m=1, \dots,M$. Likewise, for each datum $\M x_m$, the optimal configuration imposes  at the $(n,\ell)$th neuron a specific
pair of input-output values $\big(x_{m,(n,\ell)},\ z_{m,(n,\ell)}\big)$ with $z_{m,(n,\ell)}=\sigma^\ast(x_{m,(n,\ell)})$. Now, the key insight is that one can replace the optimized activation $\sigma^\ast$ by any function $\sigma \in {\rm BV}^{(2)}(\R)$ such that
$z_{m,(n,\ell)}=\sigma(x_{m,(n,\ell)}), m=1,\dots,M$ (interpolation condition) without changing the primary part $D_{\rm opt} + R(\V \theta)$ of the total cost. Since the solution $\sigma^\ast$ must also meet the slope constraints, we can invoke the second part of Theorem \ref{Theo:SplineFitConstrained} to deduce the existence of a linear spline that achieves the global optimum. The argument
generalizes to multiple neurons, including configurations where the activation is shared by several neurons.
Consequently, we are able to extend our representer theorem for deep spline networks \cite{Unser2019c} to any scenario where one wishes to impose slope constraints (such as 1-Lip or invertibility) on the activations.

\subsection{Spline Parameterization and Training}
While the theory asserts that the optimal nonlinearities can all be encoded as nonuniform splines, we still need a practical way to determine the solution. In principle, one could plug the generic form of a linear spline given by
\eqref{Eq:Linspline} with $K$ sufficiently large into \eqref{Eq:OptProblem} or \eqref{Eq:deepnet}, and then minimize the cost functional by adjusting the weights and knot locations. Unfortunately, even in the simple scenario of data fitting, such a parametric optimization is difficult because  the dependency on the knot locations makes the problem highly non-convex. The other delicate point is the poor conditioning of the ReLU basis in \eqref{Eq:Linspline}: a small perturbation of $a_k$ tends to have a huge nonlocal effect on the overall shape of $f_{\rm spline}$.

To circumvent the first problem, we place an overabundance of knots at frozen locations $\tau_k$ on the real line and then rely on the sparsity-promoting properties of our regularizer to remove the unproductive ones. The crucial ingredient here is \eqref{Eq:TV2discrete},
which allows us to recast the problem as an $\ell_1$-norm minimization.
In the interest of efficiency and to avoid the conditioning issues associated with ReLU, we actually use an alternative representation that parameterizes the spline in terms of its sample values $f_n=f_{\rm spline}(t_n)$ at $N={K+2}$ ordered locations $t_n$ with $t_1 < \tau_1$, $t_{n+1}=\tau_{n}$ for $n=1,\dots,K$ \big(the spline knots in \eqref{Eq:Linspline}\big) and $t_{N}>\tau_K$. The corresponding parametric model (nodal representation) is 
\begin{align}
\label{eq:Bsplineexp}
f_{\rm spline}(x)= \sum_{n=1}^{N} f_n \varphi_n(x),
\end{align}
where the underlying (interpolating) basis functions are given by
\begin{align*}
\varphi_1(x)&=
\begin{cases}
\frac{t_2-x}{t_2-t_1}, & x \in I_1=(-\infty, t_2)\\
0,& \mbox{ otherwise}.
\end{cases}\\
\varphi_2(x)&=\begin{cases}
\frac{x-t_1}{t_2-t_1}, & x\in I_1\\
\frac{t_3-x}{t_3-t_2}, & x \in I_2=[t_2,t_3)\\
0,& \mbox{ otherwise}.\\
\end{cases}\\
\varphi_n(x)&=\begin{cases}
\frac{x-t_{n-1}}{t_{n}-t_{n-1}}, & x\in I_{n-1}=[t_{n-1},t_{n})\\
\frac{t_{n+1}-x}{t_{n+1}-t_{n}}, &x \in I_{n}=[t_{n},t_{n+1})\\
0, & \mbox{ otherwise},\\
\end{cases}\quad \mbox{ for } n=3,\dots,(N-2).\\
\varphi_{N-1}(x)&=\begin{cases}
\frac{x-t_{N-2}}{t_{N-1}-t_{N-2}}, & x \in I_{N-2}=[t_{N-2},t_{N-1})\\
\frac{t_{N}-x}{t_{N}-t_{N-1}},& x\in I_{N-1}=[t_{N-1},+\infty)\\
0,& \mbox{ otherwise}.
\end{cases}\\
\varphi_N(x)&
=\begin{cases}
\tfrac{x-t_{N-1}}{t_{N}-t_{N-1}}, & x \in I_{N-1}\\
0,& \mbox{ otherwise}.
\end{cases}
\end{align*}
There, the real line is partitioned as
$\R=\bigcup_{n=1}^{N-1} I_n$
with the $\varphi_n$ for $2<n<(N-2)$ being nonuniform triangular B-splines supported in $[t_{n-1}, t_{n+1})=I_{n-1} \cup I_n$  (see Fig. \ref{Fig: InterpolBsplines}).
The triangular splines are complemented with four one-sided basis functions that extend linearly towards $\mp\infty$ to enable the proper extrapolation of the boundary values. Even though these four boundary functions are not compactly supported, the remarkable feature of our representation is that the evaluation of \eqref{eq:Bsplineexp} for any given $x \in \R$ involves at most two active basis functions. This makes the computation very efficient and independent of $N$. 
\begin{figure}
 \includegraphics[width = 10cm]{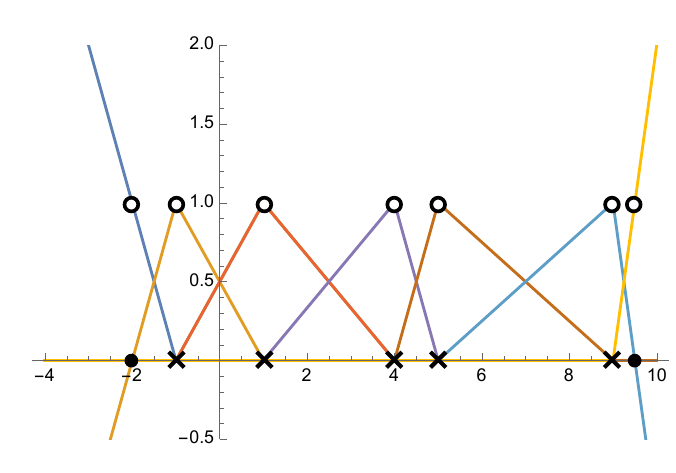}
 \caption{\label{Fig: InterpolBsplines}Interpolating basis functions associated with the grid points $\V t=(-2,-1,1,4,5,9,9.5)$. The locations of the spline knots are marked by crosses.}
\end{figure}

Given the nodal values $\M f=(f_n) \in \R^N$ of the spline, we calculate its slopes $s_n=\frac{f_n-f_{n-1}}{t_{n}-t_{n-1}}$ for $n=2,\dots,M$, and store them in the vector $\M s=(s_n) \in \R^N$ with a repeated value $s_1=s_2$ for $n=1$. This is formalized as $\M s=\M D_\V t \M f$, where $\M D_\V t\in \R^{N \times N}$ is the divided-differences matrix associated with the spline grid $\V t=(t_1, \dots,t_N)$. The vector $\M s$ informs us on the slope excursion of the spline (minimum and maximum)  and also yields the regularization cost $\mathrm{TV}^{(2)}(f_{\rm spline})=
\sum_{n=2}^N |s_n-s_{n-1}|$
(see \eqref{Eq:TV2spline}), which may be written as $\mathrm{TV}^{(2)}(f_{\rm spline})=\|\M D \M D_\V t\M f\|_{\ell_1}$, where $\M D$ is the finite-difference matrix of size $N$.

Conversely, we may convert back the slopes $\M s$ to the nodal values $\M f$ up to a global summation constant, which may be identified as the value of $f_1$. Specifically, we have that
\begin{align}
\label{Eq:Inversediff}
f_n=f_1+\sum_{n=2}^N   s_n(t_n-t_{n-1})=f_{n-1} + s_n(t_n-t_{n-1}),
\end{align} 
where the rightmost relation suggests a very efficient recursive computation of complexity $O(1)$.

In our implementation, we impose the slope constraints by applying a projector ${\rm Proj}_{\rm slope}$ that clips the values of the slope of $f_{\rm spline}$ to the range $[s_{\min},s_{\max}]$, while preserving the mean of the nodal values $f_n=f_{\rm spline}(t_{n})$. The action of this clipping operator on the spline coefficients  $\M f$ is described as
\begin{align}
\label{Eq:ProjSlope}
{\rm Proj}_{\rm slope}: \M f \mapsto \M D_\V t^\dagger {\rm clip}_{[s_{\min},s_{\max}]}(\M D_\V t \M f)+ \M 1 \frac{1}{N} \sum_{n=1}^N f_n,
\end{align}
where $\M D_\V t^\dagger$ is the unique right inverse of $\M D_\V t$ that imposes the boundary condition 
$\M 1^\Top\M D_\V t^\dagger \M s = 0$ for all $\M s \in \R^N$.
We note that $\M s \mapsto \M D_\V t^\dagger\M s$ has a fast implementation that is given by
the right-hand side of \eqref{Eq:Inversediff} modulo a proper adjustment of $f_1$.

With the proposed choice of discretization and $E(f,y)=|f-y|^2$ to keep the notation simple, we recast \eqref{Eq:OptProblem} as a finite-dimensional optimization
 \begin{align}
\label{Eq:OptProblemDis}
\M f^\ast \in \arg \min_{\M f\in \R^N} \left(\sum_{m=1}^M \| \M y - \M S \M f\|_2^2 + \lambda \|\M D \M D_\V t \M f\|_{\ell_1}\right) \\
\hspace*{3cm}\mbox{ s.t. }  s_{\min} \le [\M D_\V t\M f]_n \le s_{\max},\quad n=1,\dots,N,\nonumber\end{align}
where the underlying sampling matrix $\M S \in \R^{M \times N}$ is specified by $[\M S]_{m,n}=\varphi_n(x_m)$.
Equation \eqref{Eq:OptProblemDis} is reminiscent of the LASSO problem \cite{Tibshirani2013} encountered in compressed sensing and  is amenable to an efficient implementation using the standard tools of convex optimization \cite{Parikh_proximal_2014}. 

To handle more involved joint optimization problems of the type described by \eqref{Eq:deepnet}, we have developed a corresponding module for adaptive-spline neurons with second-order TV regularization that can be inserted in any neural-network architecture and trained efficiently using the customary optimization tools of machine learning (back-propagation \& Adam).
The present scheme extends the deep-spline framework \cite{Bohra2020b} by incorporating new features to accommodate nonuniform knots and enforce slope constraints. We achieve the latter almost seamlessly by relying on the right-hand side of \eqref{Eq:ProjSlope} (which is auto-differentiable with respect to $\M f$) to explicitly parameterize the linear splines that fulfill the constraint.
%
%
%
\rev{\subsection{Function-Fitting Experiments}
To probe the benefit of our method, we compared several parameterization of CPWL functions for the basic task of function fitting in 1D.  We considered four alternative spline models with the same number $K$ of knots (or neurons in the case of the RELU networks):
\begin{enumerate}
    \item B-spline parameterization, as described by \eqref{eq:Bsplineexp};
    \item gridded ReLUs with skip connection, as described by \eqref{Eq:Linspline};
    \item MLP-FB: two-layer neural network with fixed biases;
    \item MLP: two-layer neural network with learnable biases.
\end{enumerate}
Models 1-3 have the same knots (equally spaced on $[-3,3]$ with $K=100$), which ensures that they all span the same space of linear splines. 
Model 4 has more expressivity, as it can also learn the position of the knots/biases. The models are constrained as in \eqref{Eq:OptProblem} via the inclusion of a ${\rm TV}^{(2)}$ penalty
with strength $\lambda$. This can also be achieved for Models 3 and 4 because of the remarkable equivalence between ${\rm TV}^{(2)}$ regularization
and weight decay \cite{Parhi2020}.
}
\begin{table}[]
\begin{tabular}{l|c|c|c}
\hline \hline
\hspace*{0em}Model & $\lambda=0$  & $\lambda=10^{-6}$ & $\lambda=10^{-4}$ \\ \hline
B-splines & ${\bf 2.18 \cdot 10^{-5}}$              & ${\bf 1.39 \cdot 10^{-4}}$
          & ${\bf 9.79\cdot10^{-3}}$  \\ 
Gridded ReLUs & $ 1.00 \cdot 10^{-4}$              & $1.40 \cdot 10^{-4}$              & $9.95\cdot10^{-3}$              \\ 
2-layer NN fixed bias & $6.60 \cdot 10^{-4}$              & $1.02 \cdot 10^{-3}$              &  $6.88\cdot10^{-2}$          \\ 
2-layer NN learned bias & $2.47\cdot10^{-4}$ & $3.63\cdot10^{-4}$ &  $3.81\cdot10^{-2}$\\ \hline
\hline
\end{tabular}
 \caption{
 \label{tab:fitting_exp}
\rev{Loss (data term + regularization) achieved by the four models after training with the regularization strength $\lambda$. The data-fitting term is evaluated by sampling the trained model at 10000 evenly spaced locations.}}
\end{table}

\rev{
We trained these models in Pytorch to fit the function $f(x) = \cos(10x)\exp(-x^2)$ in the range $[-3, 3]$.
For a fair comparison, we used the same optimization parameters in all scenarios: 1 million steps and a batch size of 1000.  The resulting losses 
are shown in Table ~\ref{tab:fitting_exp}. We observe that the B-splines and the gridded ReLUs have very close performance, while the MLPs are always doing worse. 
The numbers shown in bold can be taken as ground truth because the underlying fitting problem is convex and the iterative optimization has converged.
However, what strikingly distinguishes these models is the rate of decay of the testing loss, as
visualized in Figure~\ref{Fig:mse_loss}.
\begin{figure}
\center
 \includegraphics[width = 12cm]{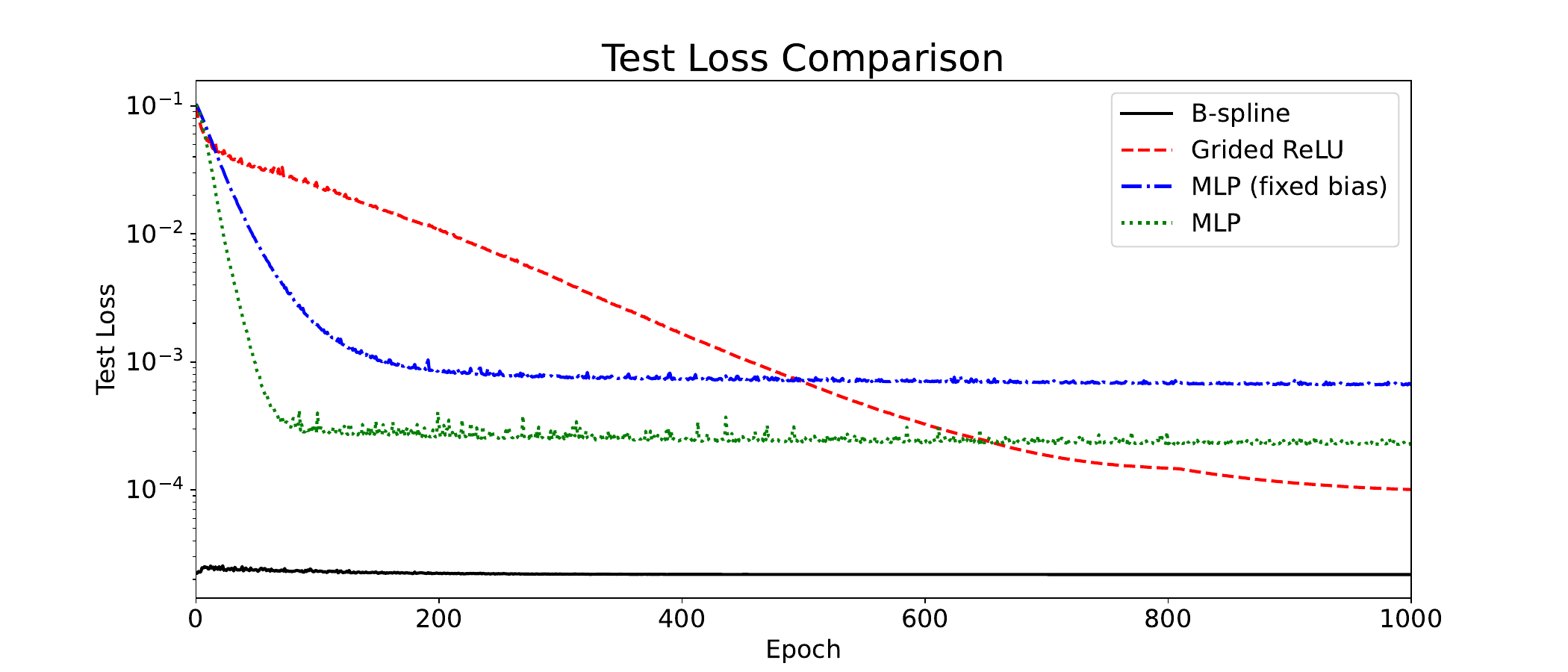}
 \caption{\label{Fig:mse_loss} Evolution of the loss during the training procedure, with an epoch corresponding to 1000 steps of SGD.}
\end{figure}
It is remarkable that the B-spline model (the bottom curve that is virtually flat) converges almost instantaneously. 
The evolution of the test loss  also suggests that the gridded ReLU model eventually converges, but at a rate that is many orders of magnitude slower than that of the B-splines.
While the two MLP models are initially able to decrease the error faster than gridded ReLU, 
they eventually stall and are unable to reach the minimum. This shows how much the local nature of the B-splines makes the training easier, not to mention that each iteration is much faster
since each data point affects two basis functions only, as opposed to the (almost) full set of ReLUs for Models 2-4. Finally, unlike Models 2-4, the B-spline representation
lends itself particularly well to the incorporation of
the kind of slope constraints supported by the present theory.
}

\section{Learned Potentials for Image Reconstruction}
\label{Sec:Denoising}
Next, we consider the application of our framework to 
the resolution of inverse problems in imaging. Given the noisy data $\M y \in \R^M$ and the linear measurement model $\M y=\M H \M x + {\rm ``noise"}$ with a known system matrix $\M H \in \R^{M \times N}$, the task is to recover the signal 
$\M x \in \R^N$. 

\subsection{Learned Gradients} 
\label{Sec:Analysis}
Our first approach is a variational formulation inspired by the ``fields of experts'' model
with a learned regularization functional \cite{Roth2009,Chen2017}. 
To that end, we specify our desired signal reconstruction as the solution of the regularized least-squares problem in
\begin{align}
\label{Eq:Inverse1}
\M x^\ast \in \arg \min_{\M x \in \R^N} \left(\frac{1}{2}\|\M y- \M H \M x\|_2^2 + 
\underbrace{ \sum_{i=1}^{I} \langle \M 1, \V \phi_i(\M W_i \M x)\rangle}_{R(\M x)} \right)
\end{align}
with a pooled regularization where each sub-term has
its own filter and its own univariate potential $\phi_i: \R \to \R^+$ (e.g., $\phi_i(z)=\lambda_i |z|$).
Specifically, the filter in regularization channel $i$ is represented 
by the convolution matrix $\M W_i\in \R^{N \times N}$, while $\V \phi_i(\M z)=\big(\phi_i(z_1),\dots, \phi_i(z_N)\big)$ is a vector-valued potential that yields a per-channel contribution $\langle \M 1, \V \phi_i(\M z)\rangle=\sum_{n=1}^N\phi_i(z_n)$.
The complete filterbank $\M W=[\M W_1 \cdots \M W_I] \in \R^{N \times(N \times I)}$ is also spectrally normalized 
 to have a direct control of weak convexity via the bounds on $s_{\min}$.

%
Under the assumption that the $\phi_i$ are convex and differentiable with $\psi_i=\phi_i'$, we can solve \eqref{Eq:Inverse1}
iteratively by steepest descent. This yields the iterative reconstruction algorithm
\begin{align}
\label{Eq:Inverse1Algo}
\M x^{(n+1)} = \M x^{(n)} - \gamma \left(\sum_{i=1}^{I} \M W_i^\Top {\V \psi_i}(\M W_i \M x^{(n)})+\M H^\Top\big(\M H \M x^{(n)}- \M y\big)\right)  \quad \mbox{with} \quad \V \psi_i=\V \phi'_i,
\end{align}
which can be interpreted as a recurrent neural network. 

To learn the regularization $R: \R^N \to \R$ in \eqref{Eq:Inverse1}
that  best represents a given class of signals/images, we follow the strategy of \cite{Goujon2023} and consider a basic denoising task with $\M H=\M I$ where the signal is corrupted by additive white Gaussian noise. To adjust the underlying model such as to achieve the best denoising on a representative set of images, we unroll the neural network \eqref{Eq:Inverse1Algo} or use deep equilibrium \cite{Gilton2021} to learn the
 filters $\M W_i$ and the nonlinearities $\psi_i$, which are shared across iteration layers. \rev{The only modification to the procedure described in \cite{Goujon2023} 
 is the incorporation of the (weak) convexity constraint.} We achieve this with the help of the projector \eqref{Eq:ProjSlope}, in adequation with  Proposition \ref{Proposition:Derivative}. 
 Once the optimal filters $\M W_i$ and spline activations $\psi_i=\phi_i'$ are known, we use \eqref{Eq:spline2}
 to deduce
 the regularization cost $R(\M x)= \sum_{i=1}^{I} \langle \M 1, \V \phi_i(\M W_i \M x)\rangle$ that works best on the denoising task. Since this regularization functional captures the prior statistical distribution of the signal, it is 
 also applicable
to the resolution of more general inverse problems (under a maximum-a-posteriori interpretation of the reconstruction process). This is to say that we can use our pretrained filters and nonlinearities to solve the whole variety of linear inverse problems specified by \eqref{Eq:Inverse1} by running the generic steepest-descent algorithm described by \eqref{Eq:Inverse1Algo} with an appropriate step $\gamma$.

For illustration purposes, we run a series of denoising experiments on natural images with increasing levels of Gaussian noise. The experimental protocol was the same as in \cite{ZZCMZ2017,Ryu2019plug} with the training set consisting of 
238'400 patches of size $(40\times 40)$ extracted from 400 images of the BSD500 dataset \cite{Arbelaez2011}. We varied
the $\rho$-weak convexity constraint from $\rho=0$  (convexity, {\rev{as in \cite{Goujon2023}) to $\rho=1$, which is the limit of convexity for the optimization problem \eqref{Eq:Inverse1} with $\M H=\M I$.}
In our framework, this corresponds to $s_{\max}=\infty$, and to have $s_{\min}$ decrease from $0$ (monotonicity) to $-1$ (weak monotonicity).
Our recurrent neural network involves $I=60$ convolution channels with filters of size $(13 \times 13)$. To facilitate the variational interpretation,
we used nonlinearities of the form $\psi_i(z)=\frac{1}{\alpha_i}\psi(\alpha_i z)$ with a single shared profile $\psi: \R \to \R$  and a scaling parameter $\alpha_i$ that is trained on a per-channel basis. The resulting 
signal-to-noise ratio curves as a function of $\rho$ are shown in Figure \ref{Fig:Denoising1}. We observe that, by relaxing the convexity constraint, we can get a performance improvement of the order of +0.5dB in all cases, albeit with a tendency to saturation in the low noise regime. We note that these results are competitive with those of BM3D \cite{Dabov2007} and among the very best within the category of denoisers specified by a convex optimization problem.
\rev{For comparison, the denoising performance of BM3D for this dataset is (37.54 dB, $\sigma$=5/255), (31.11dB, $\sigma$=15/255), and (28.60 dB, $\sigma$=25/255).}
As for the learned nonlinearity (bottom panel in Figure \ref{Fig:Potentials}), they are all antisymmetric with a linear behaviour around the origin (even if this is hardly 
apparent on the graph because of the strong underlying slope) and an asymptotic tendency to clip or even suppress (in the weakly convex scenario) inputs whose magnitude is higher than some implicit threshold.
Interestingly, the learned potential in the convex case is close to a 
$\ell_1$-norm (i.e., $\phi_i(z) \propto |z|$), while the ones for large $\rho$ have a concave profile that can be expected to promote sparsity even further.
\rev{We also found the optimized denoisers to be robust and applicable to a wide variety of images without any need for retraining. 
Moreover, we did deploy our pretained weakly-convex regularizers for image reconstruction (CT and MRI), and were able to obtain competitive results within the class of reconstruction algorithms with theoretical guarantees (e.g. consistency and stability of the reconstruction) \cite{Goujon2024Weakly}.}

\begin{figure}
 \includegraphics[width = 13cm]{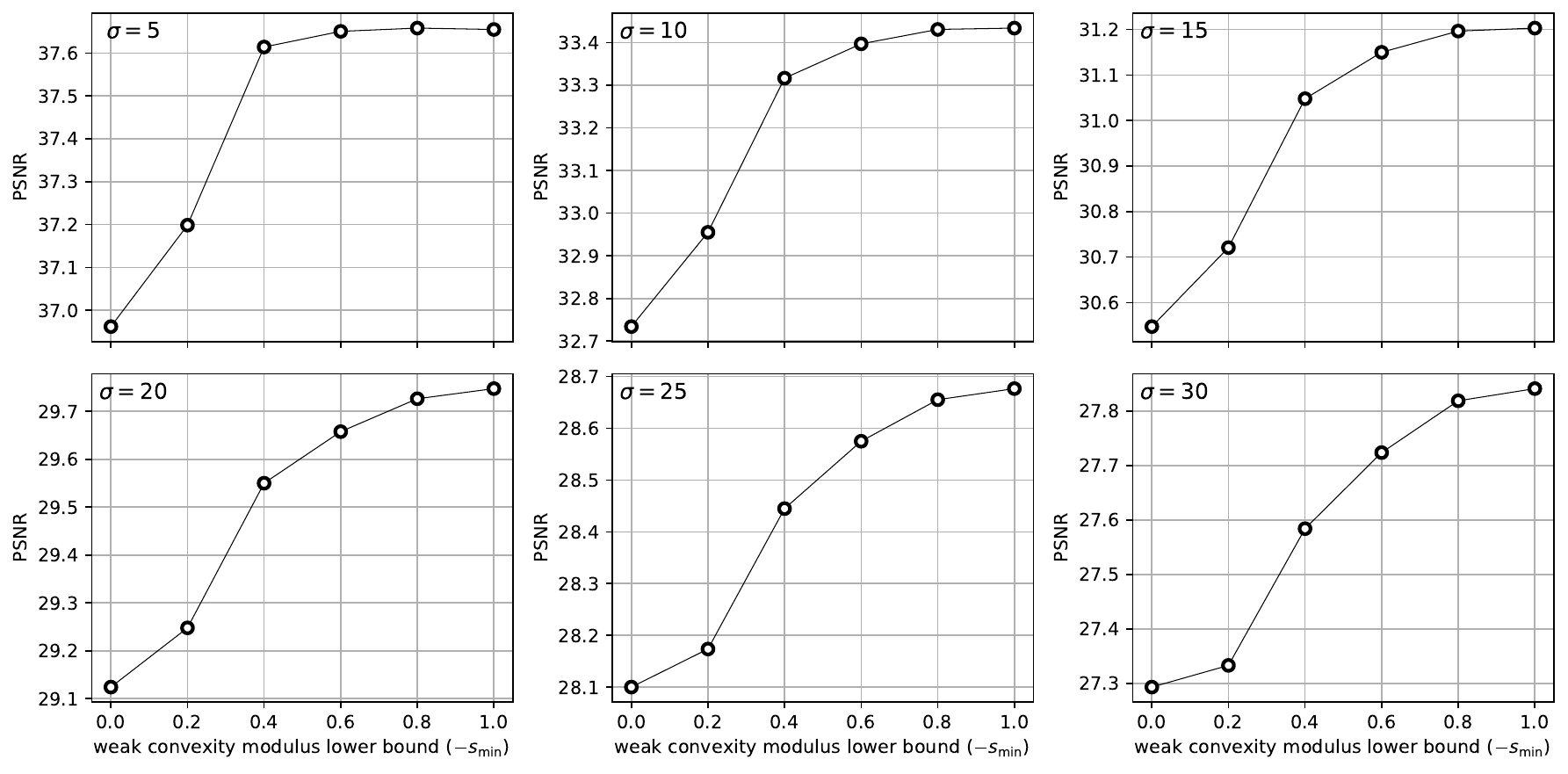}
 \caption{\label{Fig:Denoising1} Performance summary of variational denoising with trainable analysis filters as a function of $\rho$ (modulus of weak convexity).}
\end{figure}
\begin{figure}
 \includegraphics[width = 13cm]{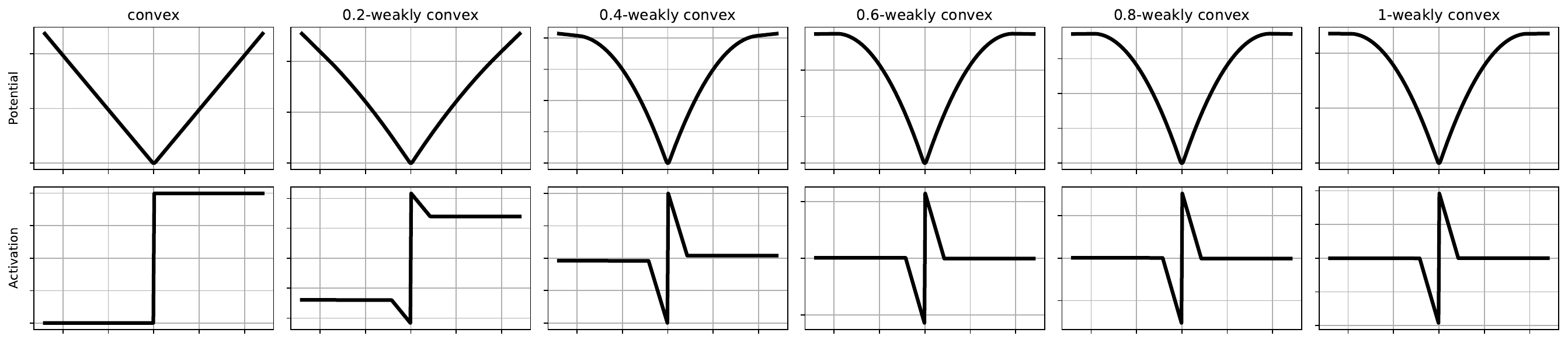}
 \caption{\label{Fig:Potentials} Learned potential $\phi$ and its derivative $\psi$.}
\end{figure}

\subsection{Learned Proximal Operators}
\begin{figure}
 \includegraphics[width = 13cm]{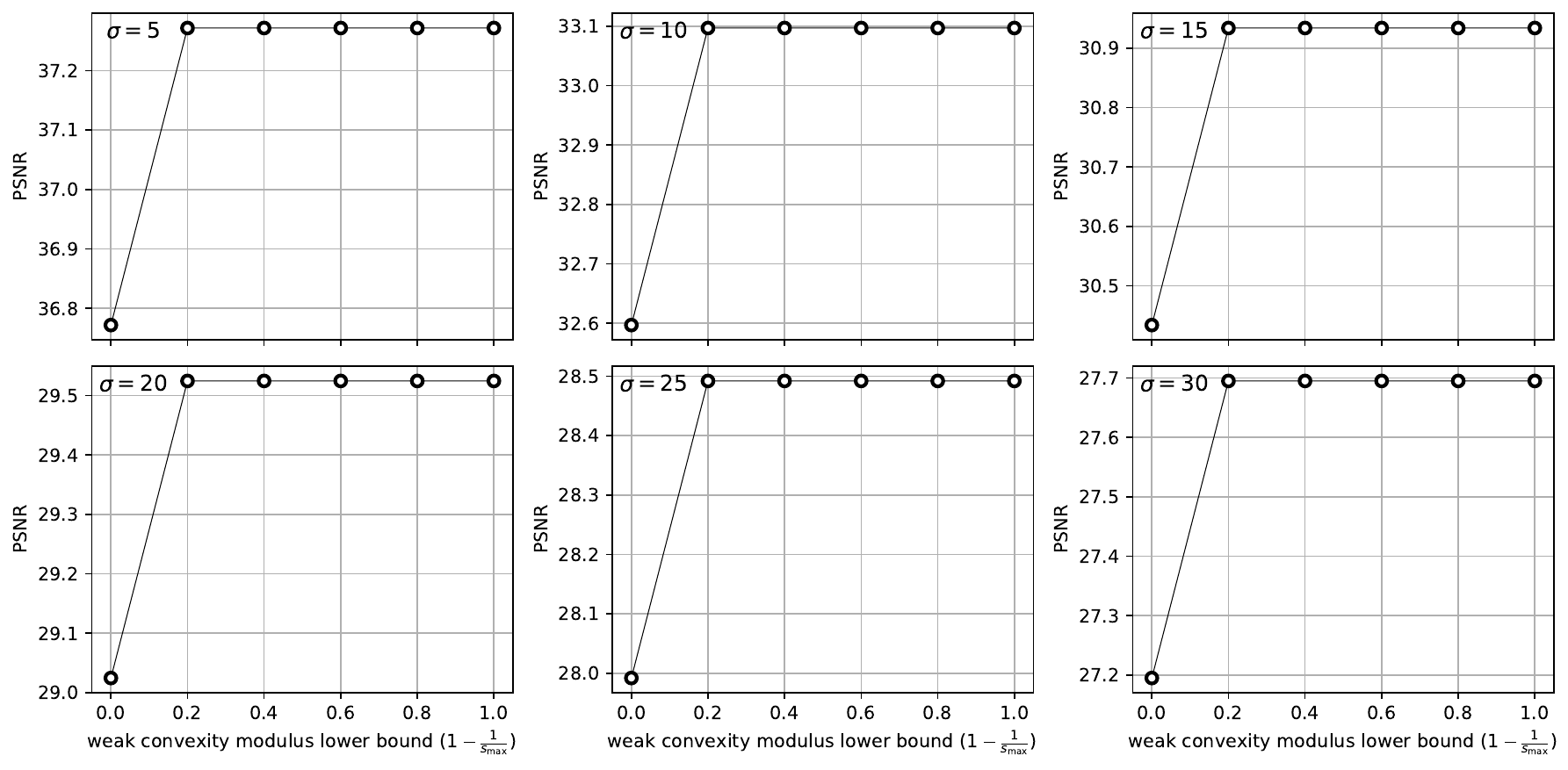}
 \caption{\label{Fig:Denoising2} Performance summary of variational denoising with trainable synthesis filters as a function of $\rho$ (modulus of weak convexity).}
\end{figure}

As alternative to the steepest-descent approach in Section \ref{Sec:Analysis}, we now demonstrate the usage of learned proximal operators.
To that end, we consider a synthesis formulation of the problem with a learnable filterbank $\M W=[\M W_1 \cdots \M W_I] \in \R^{N \times(N \times I)}$, where $\M W_i \in \R^{N \times N}$ ($i$th filter/block Toeplitz/circulant matrix) and a regularization functional that acts on the coefficients of the signal. 
We then reconstruct our signal as $\M x^\ast=\M W\M z^\ast$ where the optimal coding vector $\M z^\ast \in \R^{N \times I}$ is such that\begin{align}
\M z^\ast \in \arg \min_{\M z=(\M z_1,\dots,\M z_{I}) \in \R^{N\times I}} \left(\frac{1}{2}\|\M y- \M H \M W\M z\|_2^2 + 
\sum_{i=1}^{I} \langle \M 1, \V \phi_i(\M z_i)\rangle \right).
\label{Eq:SynthesisInverseProb}
\end{align}
There, the system matrix $\M H \in \R^{M \times N}$ is identical to that in \eqref{Eq:Inverse1}, while 
the regularization maps $\V \phi_i: \R^N \to \R^N$ retain the same structure, with a shared trainable potential $\phi_i: \R \to \R$ in each channel $i$ . 
The form of \eqref{Eq:SynthesisInverseProb} is standard in compressed sensing; it lends itself to an efficient resolution
using the popular-proximal gradient algorithm (a.k.a.\ backward-forward splitting).
The latter requires the gradient of the data term with respect to $\M z=(\M z_1, \dots, \M z_I)$, which is given by $\M W^\Top \M u$ with\begin{align}
\M u=\M H^\Top\big(\M H \M W\M z- \M y\big)\in \R^N.
\end{align}
The other important quantity is the  Lipschitz constant $L$ of this gradient,
which is bounded by the maximal singular value of $\M H$ under our working hypothesis that $\M W$ is spectrally normalized.
 This then yields the iterative reconstruction algorithm
 \begin{align}
 \label{Eq:IterProxGrad}
\M z_i^{(n+1)} = \V f_i\left(\M z_i^{(n)} - \frac{1}{L}\M W_i^\Top \M H^\Top\big(\M H \M W\M z^{(n)}- \M y\big)\right)
\end{align}
with $\V f_i=(f_i,\dots,f_i)
: \R^{N} \to \R^{N}$, where the shared nonlinearity
$f_i=\mathrm{prox}_{\tfrac{1}{L}\phi_i}: \R \to \R$ is the univariate proximal map associated with channel $i$. Again,  \eqref{Eq:IterProxGrad} for $i=1,\dots,I$ specifies a recurrent neural network with freeform activations $f_1, \dots,f_{I}$ that can be trained on a denoising task (with $\M H=\M I$ and $L=1$).
We rely on
Proposition \ref{Proposition:weaklyconvex} to ensure that the $f_i$ are admissible proximal operators. This gives the appropriate bound on $s_{\max}$ in addition to the monotonicity condition $s_{\min}\ge 0$. Here too, we can push the framework into the weakly convex regime by releasing the boundedness constraint on $s_{\max}$. 

We have applied the same protocol as in Section \ref{Sec:Analysis} to train the proximal network \eqref{Eq:IterProxGrad} for a basic denoising task. The outcome of this denoising experiment is summarized in Fig. \ref{Fig:Denoising2}. Once again, the transition into the weakly-convex regime is beneficial with an almost systematic gain of 0.5 dB, although there a strong tendency to saturation beyond $\rho=0.2$.
The results are promising, but not at the level of the ones reported in Section \ref{Sec:Analysis} where the regularization acts in an ``analysis'' mode. With the current filtering architecture, there seems to be a 0.3 to 0.1dB drop of performance (depending on the level of noise) when switching from an analysis to a synthesis configuration.
We attribute this behaviour to the greater difficulty in training the
synthesis filterbank with the stochastic-gradient procedure taking much longer to converge. This is consistent with the documented observation that convolutional sparse coding (CSC)---the special case of \eqref {Eq:SynthesisInverseProb} with a fixed nonlinearity (soft-threshold)---is not the best denoising technique among the dictionary-based methods  \rev{\cite{Chen2014Learning,Carrera2017,Plaut2019,Simon2019,Vedaldi2020}.
This suggests that there is still room for exploration in this area by considering trainable variants of other popular iterative schemes (e.g. primal-dual or ADMM) that
rely on scalar proximal maps \cite{Condat2012,Boyd2017,Monga2021}.}

\section{Conclusion}
We have presented a general framework for the controlled learning of pointwise nonlinearities in neural networks and, by extension, in any layered, trainable computational architecture. While our key result on the optimality of linear splines (Theorem \ref{Theo:SplineFitConstrained}) is stated and proved for a generic 1D data fitting problem subject to slope constraints, it has much further reaching consequences. Indeed, we have shown that  the joint optimization of the linear layers and activation functions of a deep neural network generally also yields adaptive linear spline solutions. 
We have then addressed the issue of the implementation of such trainable activations by developing a computational toolbox that relies on the use of nonuniform B-splines. A remarkable feature of the proposed parameterization is that each data point only activates two basis functions. This makes the training of the neural network (including the back-propagation step) very efficient. 
Our extended version of the deep-spline toolbox is available at \url{https://github.com/Biomedical-Imaging-Group/DeepSplines}.

Our projection-based mechanism to limit the slope excursion of the learned nonlinearities makes it very easy to impose certain desired properties.
For instance, by setting $(s_{\min},s_{\max})=(-1,1)$, we impose 1-Lipschitz stability,
which is the layer-wise condition that guarantees the convergence of plug-and-play schemes such as \cite{Ryu2019plug}. Likewise, for $(s_{\min},s_{\max})=(0,1)$, we constrain the nonlinearity to be firmly non-expansive and, hence, to be the proximal operator of a convex potential---the standard condition of usage for proximal-gradient algorithms \cite{Parikh_proximal_2014}. Another option is to set $(s_{\min},s_{\max})=(\epsilon,\infty)$ with $\epsilon>0$ arbitrarily small, which then yields a nonlinearity that is increasing and therefore invertible. Since the inverse of a linear spline is itself a linear spline, such nonlinear modules can be readily incorporated in the design of invertible flows \cite{
Kruse2021}.

\appendix
\section*{Appendix: Basic Notions from Convex Analysis}
\label{Sec:Appendix}

\subsection*{A.1 Classic Framework}
\begin{definition}[Classic convexity]
\label{Ref:Convex}
A function $f: \R^d \to \R$ is said to be
\begin{enumerate} {}
\item {\em convex} if, for all $\lambda\in ( 0,1)$ and all $\M x_1,\M x_2 \in \R^d$ such that $\M x_1\ne \M x_2$,
$$f\big(\lambda\M x_1+(1-\lambda)\M x_2\big)\le \lambda f(\M x_1)+(1-\lambda)f(\M x_2);$$
\item {\em strictly convex} if $$f\big(\lambda\M x_1+(1-\lambda)\M x_2\big)< \lambda f(\M x_1)+(1-\lambda)f(\M x_2);$$
\item {\em $\rho$-strongly convex} with $\rho>0$ if $\M x\mapsto -\rho\|\M x\|_2^2 +f\big(\M x)$ is convex;
\item {\em $\rho$-weakly convex} with $\rho>0$ if $\M x\mapsto \rho\|\M x\|_2^2 +f\big(\M x)$ is convex.
\end{enumerate}
\end{definition}
As suggested by the nomenclature, one has the following chain of implication: $\rho$-strong convexity $\Rightarrow$ strict-convexity $\Rightarrow$ convexity
\cite{Roberts1974convex}.
Also, a convex function $f: \R^d \to \R$ has the convenient property of being continuous (and a fortiori l.s.c.) over $\R^d$. If, in addition, $f$ is differentiable, then
its convexity (Item 1) is equivalent to the first-order condition \cite{Roberts1974convex,Boyd2004convex}
\begin{align}
\label{Eq:firstorder}
\forall x,y \inR^d:\quad f(y) \le f(x) + \V \nabla f(x)^\Top(y-x)
\end{align}
where $\V \nabla f: \R^d \to \R$ is the gradient of $f$.
\subsection*{A.2 Extended Framework}The notion of convexity admits a natural topological extension for functions $f: \R^d \to \overline{\R}$ whose domain
is the extended real line $\overline{\R}=\R \cup \{+\infty\}$. Such functions are often used to impose hard constraints such as the inclusion in some closed set $C \subset \R^d$. The typical example is the barrier function
\begin{align}
\label{Eq:Barrier}
i_C(\M x)=\begin{cases}
0,& \mbox{ if } \M x \in C\\
+\infty,& \mbox{otherwise.}\end{cases}
\end{align}
The relevant tool for the characterization of such functions (including the conventional ones) is the
epigraph, which is the subset of $\R^d \times \R$ defined by
 $${\rm epi}f=\{(\M x, w) \in \R^d \times \R \mbox{ s.t. } f(\M x) \le w \mbox{ for some } \M x \in \R^d\}.$$
The latter may be visualized as the area on or above the graph of the function. This alternative description then calls for the following extended definitions.


\begin{definition}
\label{Ref:ConvexExt}
A function $f: \R^d \to \overline{\R}$ is said to be
\begin{enumerate} {}
\item {\em convex} if ${\rm epi}f$ is a convex subset of $\R^d \times \R$;
\item {\em strictly convex} if ${\rm epi}f$ is a strictly convex subset of $\R^d \times \R$;
\item {\em closed} if ${\rm epi}f$ is a closed subset of $\R^d \times \R$;
\item {\em proper} if there exists at least one $\M x_0 \in \R^d$ such that $f(\M x_0)< +\infty$;
\item {\em coercive} if $f(\M x) \to +\infty$ as $\|\M x\|\to +\infty$;
\item {\em lower-semicontinuous} (l.s.c.) at a point $\M x_0$ if, for every $y< f(\M x_0)$, there exists an 
$\epsilon>0$ such that $y < f(\M x)$ for every $\M x \in B_\epsilon(\M x_0)=\{\M x \in \R^d: \|\M x-\M x_0\|_2 < \epsilon\}$.
\end{enumerate}
\end{definition}
Since $\R \subset \overline{\R}$, these definitions are also applicable to ``ordinary'' functions $f: \R^d \to \R$, in which case the characterizations in Item 1-2 of Definitions \ref{Ref:Convex} and \ref{Ref:ConvexExt} are equivalent. We also note that the property of $f$ being l.s.c.\ on $\R^d$ is equivalent to
$f$ being a closed function on $\R^d$.
In particular, the barrier function $i_C$ specified by \eqref{Eq:Barrier} is l.s.c.\ (or closed) if and only if
$C$ is a closed subset of $\R^d$. Likewise, $i_C: \R^d \to \overline{\R}$ is convex if and only if $C$ is a convex subset of $\R^d$. Finally,
$i_C$ is coercive if $C$ is a bounded subset of $\R^d$. 

The key properties for optimization theory are the coercivity and the l.s.c./ {closedness} of $f$; together, they imply the existence in $\R$ of the minimum ${\inf_{\M x\in \R}f(\M x)}>-\infty$. The convexity property is remarkable in that it ensures that any local mimum of $f$ is also a global minimum. Finally, the combination of l.s.c.\ and strict convexity ensures that the minimum is unique.

\subsection*{A3. Set-Valued Operators and Subdifferential}
The power of a set $\Spc X$ (here, the vector space $\Spc X=\R^d$) is the set of all subsets of $\Spc X$ denoted by $2^\Spc X$.
A set-valued operator $\Op T: \Spc X \to 2^\Spc X$ maps each element of $\Spc X$ into a set of $\Spc X$.
If $\Op T(x)$ is a singleton for all $x \in \Spc X$, then $\Op T$ is single-valued over $\Spc X$ and it can be identified as a conventional function $\Op T: \Spc X \to \Spc X$ (with a slight abuse of notation).
The graph of an operator $\Op T: \Spc X\to 2^{\Spc X}$ is defined as
\begin{align}
{\rm graph}\Op T= \{(x,y) \;\big|\; x \in \Spc X, y \in \Op T(x)\}.
\end{align}
This notion provides us with a convenient characterization of the inverse $\Op T^{-1}: 2^\Spc X \to 2^\Spc X$ of a set-valued operator:
\begin{align}
{\rm graph}(\Op T^{-1})= \{(y,x) \;\big|\; (x,y) \in {\rm graph}\Op T\},
\end{align}
that is, $y \in \Op T(x) \Leftrightarrow x \in \Op T^{-1}(y)$. Note that $\Op T^{-1}$ is always well-defined as a set-valued map with its value being $\emptyset$ when $y$ is not in the domain of $\Op T$. The inverse map $\Op T^{-1}$ is
single-valued (an ordinary function) if and only if $\Op T$ is bijective.

\begin{definition}
\label{Def:Operatorprops}
A set-valued operator $\Op T: \Spc H \to 2^{\Spc H}$, where $\Spc H$ is a Hilbert space equipped with the inner product $\langle \cdot,\cdot \rangle$, is said to be
\begin{enumerate}
\item {\em monotone} if $\langle y_2-y_1,x_2-x_1 \rangle\ge 0$ for all $(y_2,x_2), (y_1,x_1)\in {\rm graph}\Op T$; 
\item {\em strongly $\rho$-monotone} with $\rho>0$ if $\langle y_2-y_1,x_2-x_1 \rangle\ge \rho \|y_2-y_1\|^2$ for all $(y_2,x_2), (y_1,x_1)\in {\rm graph}\Op T$;
\item {\em weakly $\rho$-monotone} with $\rho>0$ if $\langle y_2-y_1,x_2-x_1 \rangle+\rho \|y_2-y_1\|^2\ge 0$ for all $(y_2,x_2), (y_1,x_1)\in {\rm graph}\Op T$;
\item \label{Item:Firm} {\em firmly non-expansive}  if
$\langle  y_2-y_1,  y_2-y_1\rangle \le \langle x_2-x_1,  y_2-y_1\rangle$
 for all $(y_2,x_2)$, $(y_1,x_1)\in {\rm graph}\Op T$ \cite{Bauschke2012firmly}.
 \end{enumerate}
\end{definition}
Note that the conditions in this definition are sometimes stated by replacing $y_2$ and $y_1$ by $\Op T(x_2)$ and $\Op T(x_1)$ with an implicit set-theoretic interpretation of the inequalities. For instance, the monotonicity condition may be written as  $\langle \Op T(x_2)-\Op T(x_1),x_2-x_1 \rangle\ge 0$, with the understanding that the left-hand side represents a subset of $\R$ that must be included in $(+\infty,0)$.

For any (proper) function $f: \R^d \to \overline{\R}$, the subdifferential $\partial f: \R^d \to 2^{\R^d}$ is defined as
\begin{align}
\partial f(\M x)=\{\M z \inR^d: f(\M y)\ge f(\M x)+\M z^\Top(\M y-\M x), \forall \M y \in\R^d\}.
\end{align}
While $\partial f(\M x)$ is specified as a set, it is typically a singleton. 
In particular, if $f$ is convex and differentiable at $\M x$, then $\partial f(\M x)=\{\M \nabla f(\M x)\}$ so that we can identify the subdifferential with the gradient of $f$. If, on the one hand, $f$ is nonconvex, then there usually exist values of $\M x$ such that
$\partial f(\M x)=\emptyset$. If, on the other hand, $f$ is convex, then $\partial f(\M x)$ is nonempty for every $\M x \in \R^d$, while the condition for optimality (Fermat's principle) is 
$$\M 0 \in \partial f(\M x_0) \Leftrightarrow f(\M x_0)=\inf_{\M x \R^d} f(\M x).$$

The prototypical example is $\partial |\cdot|(x)=\begin{cases} \{1\},& x>0\\
[-1,1], & x=0\\
\{-1\}, & x<0,
\end{cases}$
which returns the derivative of $|x|$ at the locations where it is well-defined and assigns the interval $[-1,1]$ at the origin where it is undefined. 

If $f: \R^d \to \R$ is either convex or $\rho$-strongly convex, then $\partial f$ is monotone ($\rho$-strongly monotone, respectively).

\bibliographystyle{ieeetr}
\bibliography{/Users/unser/MyDrive/Bibliography/Bibtex_files/Unser}
%
%
\end{document}